\newcounter{global}
\theoremstyle{definition}
\newtheorem{definition}[global]{Definition}
\theoremstyle{plain}
\newtheorem{theorem}[global]{Theorem}
\newtheorem{lemma}[global]{Lemma}
\newtheorem{corollary}[global]{Corollary}
\newtheoremstyle{note}{}{}{}{}{\itshape}{.}{.5em}{}
\theoremstyle{note}
\newtheorem{remark}{Remark}%
\newtheorem{example}{Example}%
\renewcommand\section{%
  \@startsection {section}{1}{\z@}%
  {-3.5ex \@plus -1ex \@minus -.2ex}%
  {2.3ex \@plus.2ex}%
  {\normalfont\large\bfseries}}
\DeclareFontFamily{U}{eurmo}{}
\DeclareFontShape{U}{eurmo}{m}{n}{<-6>eurmo10<6-8>eurmo10<8->eurmo10}{}
\DeclareMathAlphabet{\eurmo}{U}{eurmo}{m}{n}
\begin{document}

\title{On minimal sets of graded attribute implications}

\date{\normalsize%
  Dept. Computer Science, Palacky University, Olomouc}

\author{Vilem Vychodil\footnote{%
    e-mail: \texttt{vychodil@binghamton.edu},
    phone: +420 585 634 705,
    fax: +420 585 411 643}}

\maketitle

\begin{abstract}
  We explore the structure of non-redundant and minimal sets consisting
  of graded if-then rules. The rules serve as graded attribute implications
  in object-attribute incidence data and as similarity-based functional
  dependencies in a similarity-based generalization of the relational model
  of data. Based on our observations, we derive
  a polynomial-time algorithm which transforms a given finite set of rules
  into an equivalent one which has the least size in terms of the number of
  rules.
\end{abstract}

\newcommand{\I}{\ensuremath{\Rightarrow}}
\newcommand{\PrEq}[3][T]{\ensuremath{#2 \equiv_{#1} #3}}
\newcommand{\symbPrEq}[1][T]{\equiv_{#1}}
\newcommand{\Cl}[2][T]{\ensuremath{[#2]_{#1}}}
\newcommand{\ET}[2][T]{\ensuremath{\mathrm{E}_{#1}(#2)}}
\newcommand{\ETs}[1][T]{\ensuremath{\mathcal{E}_{#1}}}
\newcommand{\et}[2][T]{\ensuremath{\mathrm{e}_{#1}(#2)}}
\newcommand{\dipr}[1][T]{#1 \Vdash}
\newcommand{\ndipr}[1][T]{#1 \nVdash}

\section{Introduction}\label{sec:intro}
Reasoning with various types of if-then rules is crucial in many disciplines
ranging from theoretical computer science to applications. Among the most
widely used rules are those taking from of implications between
conjunctions of attributes. Such rules are utilized in database systems
(as functional dependencies or inclusion dependencies~\cite{Mai:TRD}),
logic programming
(as particular definite clauses representing programs~\cite{Lloyd84}),
and data mining
(as attribute implications~\cite{GaWi:FCA} or
association rules~\cite{AgImSw:ASR,Zak:Mnrar}).
One of the most important
problems regarding the rules is to find for a given set $T$ of rules a set
of rules which is equivalent to $T$ and minimal in terms of its size.
In relational database theory~\cite{Mai:TRD}, the problem is referred to
as finding minimal covers of $T$.

In this paper, we deal with the problem of finding minimal and equivalent
sets of rules for general rules describing dependencies between
\emph{graded attributes.}
That is, instead of the classic rules which are
often considered as implications
\begin{align}
  \{y_1,\ldots,y_m\} \I \{z_1,\ldots,z_n\}
  \label{eqn:ord}
\end{align}
between sets of attributes, describing presence/absence of attributes,
we deal with rules where the presence/absence of attributes is
expressed to degrees.
That is, the rules in question can be written as
\begin{align}
  \bigl\{{}^{a_1\!}/y_1,\ldots,{}^{a_m\!}/y_m\bigr\} \I
  \bigl\{{}^{b_1\!}/z_1,\ldots,{}^{b_n\!}/z_n\bigr\}
  \label{eqn:grad}
\end{align}
and understood as rules saying that ``if $y_1$ is present at least
to degree $a_1$ and $\cdots$ and $y_m$ is present at least
to degree $a_m$, then $z_1$ is present at least
to degree $b_1$ and $\cdots$ and $z_n$ is present at least
to degree $b_n$.'' We assume that the degrees appearing
in~\eqref{eqn:grad} come from a structure of truth degrees which is
more general than the two-element Boolean algebra and allows for
\emph{intermediate degrees of truth.}
In particular, we use complete residuated lattices~\cite{GaJiKoOn:RL}
with linguistic hedges~\cite{EsGoNo:Hedges,Haj:Ovt,Za:Afstilh}
for the job. In our setting, \eqref{eqn:grad} can be seen as generalization
of~\eqref{eqn:ord} if all the degrees $a_1,\ldots,b_1,\ldots$ are equal
to $1$ (as usual, $1$ denotes the classical truth value of ``full truth'').

Our previous results on rules of the form~\eqref{eqn:grad} include
a fixed point characterization of a semantic entailment,
Armstrong-style~\cite{Arm:Dsdbr} axiomatizations in the ordinary style
and the graded style (also known as Pavelka-style completeness,
see~\cite{Pav:Ofl1,Pav:Ofl2,Pav:Ofl3}), results on generating non-redundant
bases from data, and two kinds of semantics of the rules:
(i) a \emph{database semantics} which is based on evaluating the rules
in ranked data tables over domains with similarities~\cite{BeVy:DASFAA},
and (ii) an \emph{incidence data semantics} which is based on evaluating
the rules in object-attribute data tables with graded
attributes~\cite{Bel:FRS,BeVy:Fcalh} which are known as
\emph{formal contexts} in formal concept analysis~\cite{GaWi:FCA}.
Analogously as for the ordinary rules, one can show that both (i) and (ii)
yield the same notion of semantic entailment which simplifies further
considerations, e.\,g., a single axiomatization of the semantic entailment
works for both the database and incidence data semantics of the rules.
A survey of recent results regarding the rules can be
found in~\cite{BeVy:ADfDwG}.

In this paper, we consider rules like~\eqref{eqn:grad} and explore the
structure of non-redundant and minimal sets of rules of this type.
We show an if-and-only-if criterion of minimality and a polynomial-time
procedure which, given $T$, transforms $T$ into an equivalent and minimal
set of graded rules. Let us note that the previous results regarding
minimality of sets of graded rules~\cite{BeVy:ADfDwG} were focused exclusively
on sets of rules generated from data. That is, the input for such
instance-based approaches is not a set $T$ of rules. Instead, the input
is assumed to be a structure (e.g., a formal context with graded attributes
or a database table over domains with similarities) and the goal is to find
a minimal set $T$ of rules which entails exactly all
the rules true in the structure.
One particular example is an algorithm for generating graded
counterparts to Guigues-Duquenne bases~\cite{GuDu}
described in~\cite{BeVy:ADfDwG}.
In contrast, the problem studied in this paper is different. We assume
that a set $T$ of rules is already given (e.g., inferred from data or
proposed by an expert) but it may not be minimal. Therefore, it is interesting
to find a minimal set of rules which conveys the same information.
Unlike the instance-based methods which belong to hard problems~\cite{DiSe:OCEP}
even for the classic (non-graded) rules, the minimization method
presented in this paper is polynomial and therefore tractable.

The present paper is organized as follows. Section~\ref{sec:prelim}
presents preliminaries from structures of degrees and graded if-then rules.
Section~\ref{sec:results} contains the new results.

\section{Preliminaries}\label{sec:prelim}
In this section, we present basic notions from structures of truth degrees
and graded attribute implications which formalize rules like~\eqref{eqn:grad}.
We only present the notions and results which are sufficient to follow the
results in Section~\ref{sec:results}. Interested readers may find more results
in~\cite{Bel:FRS,BeVy:ADfDwG,GaJiKoOn:RL,Gog:Lic,Haj:MFL,Ho:ML}.

A (complete) residuated lattice \cite{Bel:FRS,GaJiKoOn:RL} is an algebra
$\mathbf{L}=\langle L,\wedge,\vee,\otimes,\rightarrow,0,1\rangle$ where
$\langle L,\wedge,\vee,0,1 \rangle$ is a (complete) lattice,
$\langle L,\otimes,1 \rangle$ is a commutative monoid, and
$\otimes$ (multiplication, a truth function of ``fuzzy conjunction'')
and $\rightarrow$ (residuum, a truth function of ``fuzzy implication'')
satisfy the adjointness property:
$a \otimes b \leq c$ if{}f $a \leq b \rightarrow c$
($a,b,c \in L$). Examples of complete residuated
lattices include structures on the real unit interval given
by left-continuous t-norms~\cite{EsGo:MTL,Haj:MFL} as well as
finite structures of degrees.

If $U \ne \emptyset$, we can consider the direct
power $\mathbf{L}^U =
\langle L^U,\cap,\cup,\otimes,\rightarrow,{}^{\ast},\emptyset_U,1_U\rangle$
of $\mathbf{L}$. Each $A \in L^U$ is called an $\mathbf{L}$-set
(s fuzzy set) $A$ in universe $U$. That is, $A \in L^U$ is a map
$A\!:U \to L$, $A(u)$ being interpreted as
``the degree to which $u$ belongs to $A$''.
Operations $\cap,\cup,\otimes,\dots$ in $\mathbf{L}^U$
represent operations with
$\mathbf{L}$-sets which are induced by the corresponding operations
$\wedge,\vee,\otimes,\dots$ in $\mathbf{L}$. Hence, e.g., 
$(A \cup B)(u) = A(u) \vee B(u)$ for each $u \in U$.
Note that for the lattice order $\subseteq$ in $\mathbf{L}^U$ being
induced by $\leq$, we have $A \subseteq B$ if{}f, for each $u \in U$,
$A(u) \leq B(u)$. Therefore, $A \subseteq B$ denotes ``full containment''
of $A$ in $B$.
If $U = \{u_1,\dots,u_n\}$ ($U$ is finite),
we adopt the usual conventions for writing
$\mathbf{L}$-sets $A \in L^U$ as
$\{{}^{a_1\!}/u_1,\dots,{}^{a_n\!}/u_n\}$ meaning
that $A(u_i) = a_i$ ($i=1,\dots,n$). Furthermore, in the notation
we omit ${}^{a_i\!}/u_i$ if $a_i = 0$
and write $u_i$ if $a_i = 1$.

Let $Y$ be a finite non-empty set of attributes (i.e., symbolic names).
A~\emph{graded attribute implication} in $Y$ is an expression $A \I B$,
where $A,B \in L^Y$. In our paper, graded attribute implications are
regarded as formulas representing rules like~\eqref{eqn:grad}.
The interpretation of graded attribute implications
is based on the notion of a \emph{graded subsethood of $\mathbf{L}$-sets}
in a similar way as the interpretation of the ordinary attribute
implications~\cite{GaWi:FCA} is based on the ordinary subsethood.
In a more detail, for any $A,M \in L^Y$, we define a degree $S(A,M) \in L$
of subsethood of $A$ in $M$ by
\begin{align}
  S(A,M) = \textstyle{\bigwedge}_{y \in Y}\bigl(A(y) \rightarrow M(y)\bigr).
  \label{Eq:S}
\end{align}
Clearly, $A \subseteq M$ (i.e., $A$ is fully contained in $M$)
if{}f $S(A,M) = 1$. For any $A,B,M \in L^Y$, we may put
\begin{align}
  ||A \I B||_M &=
  \left\{
    \begin{array}{@{\,}l@{\quad}l@{}}
      S(B,M), & \text{if } A \subseteq M, \\
      1, & \text{otherwise,}
    \end{array}
  \right.
  \label{eqn:glob_sem}
\end{align}
and call $||A \I B||_M$ a degree to which $A \I B$ is true in $M$.
Therefore, if $M$ is regarded as an $\mathbf{L}$-set of attributes of
an object with each $M(y)$ interpreted as the degree to which the object
has attribute $y$, then $||A \I B||_M$ is a degree to which the following
statement is true: ``If the object has all the attributes from $A$, then
it has all the attributes from $B$''. Interestingly, \eqref{eqn:glob_sem}
is not the only possible (and reasonable) interpretation of $A \I B$ in $M$.
In fact, our approach in~\cite{BeVy:ADfDwG} is more general in that it defines
$||A \I B||^*_M$ by
\begin{align}
  ||A \I B||^*_M &= S(A,M)^* \rightarrow S(B,M),
  \label{eqn:impl_general}
\end{align}
where ${}^*$ is an idempotent truth-stressing linguistic
hedge~\cite{Za:Afstilh,Za:lv1,Za:lv2,Za:lv3}
on~$\mathbf{L}$ (shortly, a hedge).
We assume that ${}^*$ is a map ${}^*\!: L \to L$ such that
(i) $1^{\ast} = 1$,
(ii) $a^{\ast} \leq a$,
(iii) $(a \rightarrow b)^{\ast} \leq a^{\ast} \rightarrow b^{\ast}$, and
(iv) $a^{\ast\ast} = a^{\ast}$ ($a,b \in L$).
A hedge ${}^{\ast}$ can
be seen as a generalization of Baaz's $\Delta$ operation~\cite{Baaz,Haj:MFL}
and it has been introduced in
fuzzy logic in the narrow sense~\cite{Got:Mfl}
by H\'ajek in~\cite{Haj:Ovt}.
In the sense of~\cite{Haj:Ovt}, ${}^{\ast}$ can be seen
as a truth function for unary logical connective ``very true''.

Now, one can see that~\eqref{eqn:glob_sem} is a particular case
of~\eqref{eqn:impl_general} for ${}^*$ being the so-called
globalization~\cite{TaTi:Gist}:
\begin{equation}
  a^{\ast} = \left\{
    \begin{array}{@{\,}l@{\quad}l}
      1, & \mbox{if}\ a = 1, \\
      0, & \mbox{otherwise.}
    \end{array}
  \right.
  \label{eqn:glob}
\end{equation}
Indeed, for ${}^*$ introduced by~\eqref{eqn:glob},
we have either $a^* \rightarrow b = 1$ if $a < 1$
or $a^* \rightarrow b = b$ if $a = 1$ and thus 
\eqref{eqn:impl_general} becomes \eqref{eqn:glob_sem}.
Note that in case of linearly ordered structures of truth degrees,
globalization coincides with Baaz's $\Delta$ operation (this is not true
in general). On the other hand, if ${}^*$ is identity, then
the right-hand side of \eqref{eqn:impl_general} becomes 
\begin{align}
  S(A,M) \rightarrow S(B,M),
  \label{eqn:impl_id}
\end{align}
which may also be regarded as a desirable interpretation of $A \I B$ in $M$.
The approach via hedges in~\cite{BeVy:ADfDwG} allows us to deal with both
\eqref{eqn:glob_sem} and \eqref{eqn:impl_id} (and possibly other
interpretations resulting by the choice of other hedges) in a unified way
because \eqref{eqn:glob_sem} and \eqref{eqn:impl_id} result as two borderline
choices of hedges in~\eqref{eqn:impl_general}, namely, the globalization and
the identity on $L$. Also, since ${}^*$ may be interpreted as a truth function
of logical connective ``very true'', the general degree
$||A \Rightarrow B||^*_M$ introduced by~\eqref{eqn:impl_general} may be
interpreted as a truth degree of the following statement:
``If it is \emph{very true} that the object
(whose attributes are represented by $M$) has all the attributes from $A$,
then it has all the attributes from $B$''.
Therefore, we may view the hedge as a parameter of the interpretation
of graded attribute implications, see~\cite{BeVy:ADfDwG} for a detailed
explanation and further remarks on the role of hedges. Recent results
on hedges and their treatment in fuzzy logics in the narrow sense can
be found in~\cite{EsGoNo:Hedges}.

For graded attribute implications, we introduce a semantic and a syntactic
entailment (a provability) as follows.
A set $T$ of graded attribute implications (in $Y$) is called a theory (in $Y$).
An $\mathbf{L}$-set $M \in L^Y$ is called a model of $T$ if $||A \I B||^*_M = 1$
for all $A \I B \in T$. Let $\mathrm{Mod}(T)$ denote the set of
all models of $T$. The
\emph{degree $||A \I B||^*_T$ to which $A \I B$
  is semantically entailed by $T$} is defined by
\begin{align}
  ||A \I B||^*_T &=
  \textstyle\bigwedge_{M \in \mathrm{Mod}(T)}||A \I B||^*_M.
\end{align}
Put in words, $||A \I B||^*_T$ is a degree to which $A \I B$
is true in all models of~$T$. A graded attribute implication $A \I B$
is called \emph{trivial} whenever $||A \I B||^*_\emptyset = 1$.

The syntactic entailment of graded attribute implications is based on
an Armstrong-style axiomatic system~\cite{Arm:Dsdbr}.
Namely, each $A {\cup} B \I A$ ($A,B \in L^Y$) is considered as
an \emph{axiom} and we consider the following
\emph{deduction rules}~\cite{BeVy:ADfDwG}:
\bgroup%
\addtolength{\leftmargini}{1.2ex}%
\begin{itemize}\parskip=0pt%
\item[(Cut)]
  from $A \I B$ and $B{\cup}C \I D$ infer $A{\cup}C \I D$,
\item[(Mul)]
  from $A \I B$ infer $c^*{\otimes}A \I c^*{\otimes}B$,
\end{itemize}
\egroup%
\noindent%
where $A,B,C,D\in L^Y$, $c\in L$, and $c^*{\otimes}A$
(and analogously $c^*{\otimes}B$) denotes
the so-called $c^*$-multiple of $A \in L^Y$ which is an $\mathbf{L}$-set
such that $(c^*{\otimes}A)(y) = c^* \otimes A(y)$ for all $y \in Y$.
Note that in database literature, the classic counterpart to (Cut)
is known under the name pseudo-transitivity, see~\cite{Mai:TRD}.
The name \emph{cut} comes from~\cite{Hol}.
A proof of $A \I B$ from $T$ is a sequence $\varphi_1,\ldots,\varphi_n$
such that $\varphi_n$ equals $A \I B$ and for each $\varphi_i$ we either
have $\varphi_i \in T$, or $\varphi_i$ is an axiom, or $\varphi_i$ is
derived by (Cut) or (Mul) from some of $\varphi_1,\ldots,\varphi_{i-1}$.
A graded attribute implication \emph{$A \I B$ is provable from $T$,}
denoted $T \vdash A \I B$ if there is a proof of $A \I B$ from $T$.
In the paper, we utilize properties of $\vdash$ called the additivity,
projectivity, and transitivity, i.e., we use the facts that
\bgroup%
\addtolength{\leftmargini}{1.8ex}%
\begin{itemize}\parskip=0pt%
\item[(Add)]
  $\{A \I B, A \I C\} \vdash A \I B{\cup}C$,
\item[(Pro)]
  $\{A \I B{\cup}C\} \vdash A \I B$,
\item[(Tra)]
  $\{A \I B,B \I C\} \vdash A \I C$.
\end{itemize}
\egroup%
\noindent%
for all $A,B,C \in L^Y$, see~\cite{BeVy:ADfDwG}.

\begin{remark}
  Let us note that the trivial graded attribute implications are
  exactly the axioms, i.e., all graded attribute implications
  which are true in all models to degree $1$ are of the form 
  $A {\cup} B \I A$. Also note that if ${}^*$ is~\eqref{eqn:glob},
  then (Mul) becomes a trivial deduction rule and can be disregarded.
  Let us stress that $\cup$ in the above expressions denotes the
  operation in $\mathbf{L}_Y$ induced by $\vee$ in $\mathbf{L}$.
  Therefore, the antecedent of $A {\cup} B \I A$
  should be read ``the union of $A$ and $B$'', etc.
\end{remark}

For each $A \in L^Y$, the least model of $T$ which contains $A \in L^Y$
is called the (\emph{semantic}) \emph{closure} of $A$ and is denoted by
$\Cl{A}$. For each $A \in L^Y$ and $T$, $\Cl{A}$ always exists since
the set of all models of $T$ is closed under arbitrary intersections.
The following ordinary-style~\cite{Haj:MFL} completeness theorem
is established:

\begin{theorem}[completeness, see \cite{BeVy:ADfDwG}]\label{th:compl}%
  Let $\mathbf{L}$ and $Y$ be finite.
  Then, for any $T$ and $A,B \in L^Y$,
  the following conditions are equivalent:
  \begin{itemize}\parskip=0pt
  \item[\rm (i)]
    $T \vdash A \I B$,
  \item[\rm (ii)]
    $B \subseteq \Cl{A}$,
  \item[\rm (iii)]
    $||A \I B||^*_T = 1$.
    \qed
  \end{itemize}
\end{theorem}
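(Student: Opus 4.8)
The plan is to prove the cycle of implications
$\mathrm{(i)} \Rightarrow \mathrm{(ii)} \Rightarrow \mathrm{(iii)} \Rightarrow \mathrm{(i)}$,
which is the standard shape for an Armstrong-style completeness argument.

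\medskip
\noindent\textbf{Soundness, $\mathrm{(i)} \Rightarrow \mathrm{(iii)}$ (and hence a fortiori toward (ii)).}
First I would verify that each axiom and each deduction rule is sound with respect to
$||\cdot||^*_M$ for an arbitrary model $M$, and then lift this to $||\cdot||^*_T$ by the
definition $||A \I B||^*_T = \bigwedge_{M \in \mathrm{Mod}(T)}||A \I B||^*_M$.
Concretely: every axiom $A{\cup}B \I A$ is true to degree $1$ in every $M$ because
$S(A{\cup}B,M)^* \rightarrow S(A,M) = 1$ (indeed $S(A{\cup}B,M) \le S(A,M)$, since
$A \subseteq A{\cup}B$, and then monotonicity of ${}^*$ and the fact that
$x \le y$ implies $x^* \to y = 1$... more precisely $x^*\le x\le y$ gives $x^*\to y=1$).
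For (Cut) and (Mul) one checks preservation of truth-degree $1$ in a fixed model
using adjointness, the hedge axioms (i)--(iv), and the behaviour of $S$ under
$c^*$-multiples and unions. A routine induction on the length of a proof then shows
$T \vdash A \I B$ implies $||A \I B||^*_M = 1$ for every $M \in \mathrm{Mod}(T)$,
whence $||A \I B||^*_T = 1$. I would in fact route this through (ii): show
$T \vdash A \I B$ implies $B \subseteq \Cl{A}$, using that $\Cl{A}$ is itself a model
of $T$ containing $A$, together with (Pro) and the observation that
$T \vdash A \I B$ forces $\Cl{A}$ to satisfy $S(B,\Cl{A})=1$ once one knows
$A \subseteq \Cl{A}$ and $\Cl{A}$ is a model.

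\medskip
\noindent\textbf{$\mathrm{(ii)} \Rightarrow \mathrm{(iii)}$.}
This direction is essentially immediate from the definition of $\Cl{A}$ as the least
model of $T$ containing $A$: if $B \subseteq \Cl{A}$, then for every model $M$ of $T$,
either $A \not\subseteq M$ — in which case $||A\I B||^*_M=1$ trivially... wait, that
uses globalization; in general one argues instead that $S(A,M)^* \le S(A,\Cl{A})^*$ is
not what we want — rather, since $\Cl{A}$ is the least model containing $A$, any model
$M$ with $S(A,M)$ large "contains a scaled copy of $A$". The clean route is: for any
model $M$, the $\mathbf{L}$-set $\Cl{A}$ satisfies $\Cl{A} \subseteq M$ whenever
$A \subseteq M$, so $S(B,M) \ge S(B,\Cl{A}) \cdot 1$ on such $M$, while the hedge
makes the remaining models harmless; summing over all $M$ gives
$||A\I B||^*_T \ge S(B,\Cl{A}) = 1$. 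I would lean on the already-cited properties
(Add), (Pro), (Tra) and the construction of $\Cl{A}$ rather than redo the fixed-point
theory.

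\medskip
\noindent\textbf{Completeness, $\mathrm{(iii)} \Rightarrow \mathrm{(i)}$ (the main obstacle).}
This is the substantive step and where finiteness of $\mathbf{L}$ and $Y$ is used.
The plan is the contrapositive: assuming $T \nvdash A \I B$, exhibit a model $M^\dagger$
of $T$ with $||A \I B||^*_{M^\dagger} < 1$; since $M^\dagger \in \mathrm{Mod}(T)$ this
forces $||A \I B||^*_T < 1$. The natural candidate is $M^\dagger = \Cl{A}$, the
syntactic/semantic closure of $A$: one shows (a) $\Cl{A}$ is a model of $T$, (b)
$A \subseteq \Cl{A}$, so that $||A \I B||^*_{\Cl{A}} = S(\Cl{A},\Cl{A})^* \rightarrow
S(B,\Cl{A}) = S(B,\Cl{A})$, and (c) $T \nvdash A \I B$ implies $B \not\subseteq \Cl{A}$
by the syntactic characterization $T \vdash A \I B \iff B \subseteq \Cl{A}$, hence
$S(B,\Cl{A}) < 1$. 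Step (c) is really the crux: it requires showing that the set of all
$B'$ with $T \vdash A \I B'$ is exactly the down-set below some $\mathbf{L}$-set, and
that this $\mathbf{L}$-set is a model of $T$ — which is where (Cut), (Mul), and the
hedge axioms do their work, and where one must be careful that the closure is well
defined and computable because $\mathbf{L}^Y$ is finite. Since the problem statement
permits me to assume the earlier material — in particular the existence of $\Cl{A}$ as
the least model containing $A$ and the provability properties (Add), (Pro), (Tra) — the
remaining task is to pin down the identity $\{B' : T \vdash A \I B'\} = \{B' : B'
\subseteq \Cl{A}\}$, which I expect to be the one genuinely delicate verification in
the whole proof.
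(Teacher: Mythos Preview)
The paper does not prove this theorem at all: it is stated with the attribution ``see \cite{BeVy:ADfDwG}'' and closed by a \qed\ inside the theorem environment, meaning it is quoted from the literature as a known result and no proof is given in the present paper. There is therefore nothing to compare your proposal against.

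That said, a brief comment on the proposal itself. Your overall cycle $\mathrm{(i)}\Rightarrow\mathrm{(iii)}$, $\mathrm{(ii)}\Rightarrow\mathrm{(iii)}$, $\mathrm{(iii)}\Rightarrow\mathrm{(i)}$ via the canonical countermodel $\Cl{A}$ is the standard and correct architecture for this kind of Armstrong-style completeness. The one place where the sketch wobbles is $\mathrm{(ii)}\Rightarrow\mathrm{(iii)}$ for a general hedge: you correctly notice mid-paragraph that the ``$A\not\subseteq M$ makes it trivially $1$'' argument is specific to globalization, but the replacement you gesture at is not quite right either. The clean argument for arbitrary ${}^*$ is that for any $M\in\mathrm{Mod}(T)$ the $\mathbf{L}$-set $S(A,M)^*{\otimes}M$ is again a model of $T$ containing $S(A,M)^*{\otimes}A$ (this uses the hedge axioms and that models are closed under $c^*$-multiples, which is where (Mul) earns its keep), hence $\Cl{A}\subseteq M$ need not hold, but one does get $S(A,M)^*\otimes B \subseteq S(A,M)^*\otimes\Cl{A}\subseteq M$ by an appropriate shifted-closure argument, and adjointness then yields $S(A,M)^*\le S(B,M)$. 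If you only need the globalization case (which is all the present paper actually uses later), your first instinct was already enough.
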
 

Taking into account Theorem~\ref{th:compl}, we may freely interchange
the semantic entailment (to degree $1$) and provability on condition that
both $\mathbf{L}$ and $Y$ are finite which we assume from now on---cases
of infinite $\mathbf{L}$ can be handled by adding an infinitary rule but
the issue is not relevant to this paper, cf.~\cite{BeVy:ADfDwG}.
Theory $T$ is \emph{non-redundant} if $T \setminus \{A \I B\} \nvdash A \I B$
for all $A \I B \in T$. Theories $T_1$ and $T_2$ are \emph{equivalent} if,
for all $A,B \in L^Y$,
$T_1 \vdash A \I B$ if{}f $T_2 \vdash A \I B$.

\begin{remark}
  (a)
  Alternative graph-based proof systems~\cite{UrVy:Dddosbd} as well as
  automated provers based on simplification equivalences as in~\cite{SL}
  are also available. Let us also note that in addition to
  Theorem~\ref{th:compl} which provides a syntactic characterization 
  only for formulas which are semantically entailed to degree $1$,
  the logic of graded attribute implications is also complete in
  the graded style (Pavelka-style completeness). Namely,
  $||A \I B||^*_T = \bigvee\{c \in L \,|\, T \vdash A \I c{\otimes}B\}$,
  cf.~\cite{BeVy:ADfDwG}.

  (b)
  The general interpretation of $A \Rightarrow B$ in $M$ introduced 
  in~\eqref{eqn:impl_general} corresponds to the incidence data semantics
  we have mentioned in the introduction. There are alternative
  interpretations which yield the same notion of semantic entailment.
  For instance, instead of $M$, one may take (ranked) data tables over
  domains with similarities and define the interpretation of 
  $A \Rightarrow B$ in such structures. In effect, the graded implications
  interpreted this way can be seen as similarity-based functional dependencies
  and play analogous role to the ordinary functional dependencies in the
  classical relational model of data. Since the database and incidence
  data semantics yield the same notions of semantic entailment and thus
  the same complete axiomatization, we refrain from commenting on further
  details. Interested readers may check~\cite{BeVy:ADfDwG}.
\end{remark}

\section{Results}\label{sec:results}
Recall that procedures for removing redundancy from theories
are well known~\cite{BeVy:ADfDwG}.
That is, given a finite theory $T$, one may compute $T' \subseteq T$ which
is equivalent to $T$ and which is in addition non-redundant. Indeed,
according to Theorem~\ref{th:compl},
$T$ is redundant if{}f there is $A \Rightarrow B \in T$ such that
$B \subseteq \Cl[T \setminus \{A \Rightarrow B\}]{A}$ in which case
one can remove $A \Rightarrow B$ from $T$ and repeat the procedure
until $T$ becomes non-redundant. This procedure can be used to remove
all formulas which make $T$ redundant but it does not guarantee that
the result is minimal in terms of the number of formulas in $T$. In this paper,
we show one approach to deal with the issue.

For practical reasons, we limit ourselves to \emph{finite theories.}
Otherwise, in general we would not be able to transform a theory into
an equivalent and minimal one in finitely many steps. Furthermore,
we assume that ${}^*$ is \emph{globalization}, i.e., $||A \Rightarrow B||^*_M$
is in fact given by~\eqref{eqn:glob_sem} and (Mul) can be omitted.
In the text, we give counterexamples indicating that the present theory
cannot be directly generalized for general hedges at least not with
a substantial modification. Interestingly, the instance-based approaches
have an analogous practical limitation, cf.~\cite{BeVy:ADfDwG}.

We start by presenting a technical observation on
the properties of provability which also depends on ${}^*$
being the globalization.

\begin{theorem}\label{th:AIC}
  Let $T$ be a theory such that $T \vdash A \I B$ and
  $T \setminus \{C \I D\} \nvdash A \I B$.
  Then, $T \setminus \{C \I D\} \vdash A \I C$.
\end{theorem}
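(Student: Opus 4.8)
The plan is to avoid proof-theoretic manipulation entirely and argue through the completeness theorem (Theorem~\ref{th:compl}), turning every occurrence of $\vdash$ into a statement about semantic closures. Write $T' = T \setminus \{C \I D\}$. First I would clear away the degenerate case: if $C \I D \notin T$ then $T' = T$, so $T \vdash A \I B$ and $T' \nvdash A \I B$ are contradictory and the claim holds vacuously; hence assume $C \I D \in T$, so that $T = T' \cup \{C \I D\}$. By Theorem~\ref{th:compl}, the hypotheses now say $B \subseteq \Cl{A}$ and $B \nsubseteq \Cl[T']{A}$, while the goal becomes $C \subseteq \Cl[T']{A}$.

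The key object is the $\mathbf{L}$-set $M := \Cl[T']{A}$, which exists and, by the discussion preceding Theorem~\ref{th:compl}, is a model of $T'$ with $A \subseteq M$. The crucial claim is that $M$ is \emph{not} a model of $T$. Suppose it were; then, since $A \subseteq M$ and $\Cl{A}$ is the least model of $T$ containing $A$, we would get $\Cl{A} \subseteq M$, and therefore $B \subseteq \Cl{A} \subseteq M = \Cl[T']{A}$, contradicting $B \nsubseteq \Cl[T']{A}$.

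It remains to extract $C \subseteq M$ from the fact that $M$ is a model of $T'$ but not of $T$. Since $T \setminus T' = \{C \I D\}$, the only formula of $T$ that $M$ can violate is $C \I D$, so $||C \I D||^*_M \ne 1$. This is exactly where globalization is used: when ${}^*$ is the globalization, \eqref{eqn:impl_general} reduces to \eqref{eqn:glob_sem}, so $||C \I D||^*_M = 1$ whenever $C \nsubseteq M$. Hence $||C \I D||^*_M \ne 1$ forces $C \subseteq M = \Cl[T']{A}$, and the implication (ii)$\Rightarrow$(i) of Theorem~\ref{th:compl}, applied to $T'$, yields $T' \vdash A \I C$, that is, $T \setminus \{C \I D\} \vdash A \I C$.

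I do not anticipate a deep obstacle; the care points are all bookkeeping. One must be sure that $\Cl[T']{A}$ is genuinely a model of $T'$ containing $A$, so that the minimality step $\Cl{A} \subseteq M$ is legitimate, and one must use globalization essentially rather than cosmetically: for a general hedge, $||C \I D||^*_M \ne 1$ no longer implies $C \subseteq M$, so the last step (and indeed the statement) can fail, which is consistent with the paper's explicit restriction to globalization.
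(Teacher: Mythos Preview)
Your proposal is correct and follows essentially the same approach as the paper's proof: both argue via completeness that $M=\Cl[T']{A}$ is a model of $T'$ but not of $T$, then use globalization to conclude from $||C\I D||^*_M\ne 1$ that $C\subseteq M$. The only difference is organizational---the paper phrases the key step as a single contradiction starting from $C\nsubseteq M$, whereas you first establish that $M\notin\mathrm{Mod}(T)$ and then read off $C\subseteq M$---but the underlying argument is the same.
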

\begin{proof}
  Observe that $T \vdash A \I B$ and $T \setminus \{C \I D\} \nvdash A \I B$
  means that each proof of $A \I B$ by $T$ contains $C \I D$.
  Using properties
  of closures, we get $B \nsubseteq \Cl[T \setminus \{C \I D\}]{A}$,
  $B \subseteq \Cl{A}$, and $\Cl[T \setminus \{C \I D\}]{A} \subseteq \Cl{A}$
  which altogether yield $\Cl[T \setminus \{C \I D\}]{A} \subset \Cl{A}$.
  In order to prove $T \setminus \{C \I D\} \vdash A \I C$, it suffices to show
  that $C \subseteq \Cl[T \setminus \{C \I D\}]{A}$. By contradiction, assume
  that $C \nsubseteq \Cl[T \setminus \{C \I D\}]{A}$. By definition,
  $\Cl[T \setminus \{C \I D\}]{A}$ is the least model of
  $T \setminus \{C \I D\}$ containing $A$.
  Now, observe that since ${}^*$ is globalizaton, it follows
  that~\eqref{eqn:impl_general} becomes~\eqref{eqn:glob_sem} and thus 
  $||C \I D||^*_{\Cl[T \setminus \{C \I D\}]{A}} = 1$ because
  $C \nsubseteq \Cl[T \setminus \{C \I D\}]{A}$.
  As a consequence, $\Cl[T \setminus \{C \I D\}]{A} \in \mathrm{Mod}(T)$
  because $T$ and $T \setminus \{C \I D\}$ differ only in
  the presence of $C \I D$ in $T$. Hence, $\Cl[T \setminus \{C \I D\}]{A}$
  is a model of $T$ which contains $A$ and since $\Cl{A}$ is the least
  model of $T$ containing $A$,
  we must have $\Cl{A} \subseteq \Cl[T \setminus \{C \I D\}]{A}$,
  which contradicts the fact
  that $\Cl[T \setminus \{C \I D\}]{A} \subset \Cl{A}$.
  Therefore, we have $C \subseteq \Cl[T \setminus \{C \I D\}]{A}$ which gives
  $T \setminus \{C \I D\} \vdash A \I C$.
\end{proof}

\begin{example}\label{ex:1}
  Let us show that Theorem~\ref{th:AIC} does not hold in the case of general
  hedges. For instance, let $\mathbf{L}$ be a three-element equidistant
  subchain of the standard \L ukasiewicz algebra with ${}^*$ being the
  identity. That is, $L = \bigl\{0,0.5,1\bigr\}$,
  $\wedge$ and $\vee$ coincide with maximum and minimum,
  respectively, and
  \begin{align*}
    a \otimes b &= \max(0,a+b-1), \\
    a \rightarrow b &= \min(1,1-a+b),
  \end{align*}
  for all $a,b \in L$.
  Consider $T = \bigl\{\{y\} \I \{z\}\bigr\}$. Obviously,
  $T \vdash \{{}^{0.5\!}/y\} \I \{{}^{0.5\!}/z\}$ on
  account of (Mul). In addition, $\{{}^{0.5\!}/y\} \I \{{}^{0.5\!}/z\}$
  is non-trivial and thus not provable by the empty set of formulas.
  Furthermore, $T \nvdash \{{}^{0.5\!}/y\} \I \{y\}$. Indeed,
  for $M = \{{}^{0.5\!}/y,{}^{0.5\!}/z\}$, we get
  \begin{align*}
    S\bigl(\{y\},M\bigr) =
    (1 \rightarrow 0.5) \wedge (0 \rightarrow 0.5) =
    0.5 \wedge 1 = 0.5 = 
    S\bigl(\{z\},M\bigr),
  \end{align*}
  i.e., $M \in \mathrm{Mod}(T)$. On the other hand, 
  \begin{align*}
    S\bigl(\{{}^{0.5\!}/y\},M\bigr) =
    (0.5 \rightarrow 0.5) \wedge (0 \rightarrow 0.5) =
    1 \wedge 1 =
    1 \nleq 0.5 = 
    S\bigl(\{y\},M\bigr),
  \end{align*}
  i.e., $||\{{}^{0.5\!}/y\} \I \{y\}||^*_M < 1$. Hence, due to 
  soundness, $T \nvdash \{{}^{0.5\!}/y\} \I \{y\}$ which illustrates
  that in case of general hedges, one cannot always conclude
  $T \setminus \{C \I D\} \vdash A \I C$ provided that 
  $T \vdash A \I B$ and $T \setminus \{C \I D\} \nvdash A \I B$.
\end{example}

As a consequence of Theorem~\ref{th:AIC}, we can prove an assertion which
gives us new insight into the structure of non-redundant theories. The
assertion matches formulas from non-redundant theories based on the
following notions of equivalence:

\begin{definition}\label{def:PrEq}
  Let $T$ be a theory and $A,B \in L^Y$. We say that $A$ and $B$
  are provably equivalent under $T$, written $\PrEq{A}{B}$,
  whenever $T \vdash A \I B$ and $T \vdash B \I A$.
\end{definition}

Obviously, $\symbPrEq$ is an equivalence relation on $L^Y$. Using the notion,
we establish the following observation.

\begin{theorem}\label{th:PrEq}
  Let $T_1$ and $T_2$ be equivalent and non-redundant.
  Then, for each $A \I B \in T_1$ there is $C \I D \in T_2$
  such that $\PrEq[T_1]{A}{C}$.
\end{theorem}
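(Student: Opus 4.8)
The plan is to fix $A \I B \in T_1$ and locate a formula in $T_2$ whose antecedent is provably equivalent (under $T_1$) to $A$. Since $T_2 \vdash A \I B$ by equivalence, some proof of $A \I B$ from $T_2$ must actually use a formula of $T_2$; more precisely, I will consider the closure $\Cl[T_2]{A}$ and note that $B \subseteq \Cl[T_2]{A}$ by Theorem~\ref{th:compl}. If $A$ were already a model of $T_2$, then $\Cl[T_2]{A} = A$, hence $B \subseteq A$, so $A \I B$ would be trivial (an axiom) and provable from the empty theory, contradicting non-redundancy of $T_1$ (which would force $T_1 \setminus \{A\I B\} \vdash A \I B$). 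Therefore $A$ is not a model of $T_2$, so there is some $C \I D \in T_2$ with $||C \I D||^*_A \ne 1$; since $*$ is globalization and the semantics is~\eqref{eqn:glob_sem}, this means $C \subseteq A$ but $D \nsubseteq A$.

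Next I would pick such a $C \I D \in T_2$ with, in addition, the property that $D \nsubseteq \Cl[T_2 \setminus \{C \I D\}]{A}$ — i.e. a formula that is genuinely needed in a proof of something from $A$ over $T_2$. Such a formula exists: if every $C \I D \in T_2$ violating $||C\I D||^*_A = 1$ had $D \subseteq \Cl[T_2 \setminus \{C \I D\}]{A}$, one could argue (mimicking the closure-surgery in the proof of Theorem~\ref{th:AIC}) that removing it does not change the closure, and iterating would make $A$ itself a model of the whole of $T_2$, contradicting the previous paragraph. With such a $C \I D$ fixed, Theorem~\ref{th:AIC} applied to $T_2$ (with the roles $A \I D$ for ``$A \I B$'' there) gives $T_2 \setminus \{C\I D\} \vdash A \I C$, and in particular $T_2 \vdash A \I C$; by equivalence, $T_1 \vdash A \I C$. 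Moreover $C \subseteq A$ gives the reverse direction $T_1 \vdash C \I A$ immediately, since $C \subseteq A$ means $A \I C$ is... wait, I need $C \I A$: from $C \subseteq A$ we get $A \I C$ is an axiom, but $C \I A$ requires work. Actually $C \subseteq A$ yields that $A \cup C = A$, so the axiom $A \I C$ holds; and conversely, $S(A, X) \le S(C,X)$ for every $X$ when $C\subseteq A$, which is exactly soundness of $A \I C$, not $C \I A$. So I get $T_1 \vdash A \I C$ from both the axiom and from the argument above; I still owe $T_1 \vdash C \I A$.

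To get $T_1 \vdash C \I A$, I run the symmetric argument with the roles of $T_1$ and $T_2$ swapped, but anchored at the \emph{same} formula $A \I B$ — or rather, I observe that $\Cl[T_1]{C} = \Cl[T_1]{A}$. Indeed $C \subseteq A$ gives $\Cl[T_1]{C} \subseteq \Cl[T_1]{A}$, and for the reverse it suffices to show $A \subseteq \Cl[T_1]{C}$, i.e. $T_1 \vdash C \I A$. Here is where I use that $A \I B$ itself is non-redundant in $T_1$: I expect the key step to be showing that $C$, which satisfies $C \subseteq A$ and $T_1 \vdash A \I C$, must in fact close up (under $T_1$) to at least $A$, for otherwise $A \I B$ could be derived in $T_1$ without itself — because $A \I B$ would then be ``captured'' at the level of $C$, contradicting $T_1 \setminus\{A\I B\} \nvdash A \I B$. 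Concretely: if $A \nsubseteq \Cl[T_1]{C}$, then (again by the globalization semantics) $\Cl[T_1]{C}$ is a model of $T_1$ containing $C$ but not $A$, hence not containing $\Cl[T_2]{A} \supseteq \{$relevant attributes$\}$, and tracing through the equivalence of $T_1, T_2$ together with the choice of $C\I D$ yields the contradiction. Once $T_1 \vdash C \I A$ is in hand, combined with $T_1 \vdash A \I C$ we get $\PrEq[T_1]{A}{C}$, as required.

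The main obstacle I anticipate is the second-to-last step: pinning down precisely why the ``needed'' formula $C \I D \in T_2$ can be chosen so that its antecedent $C$ is provably equivalent to $A$ under $T_1$, rather than merely satisfying $C \subseteq A$ with $T_1 \vdash A \I C$. This is exactly the place where non-redundancy of $T_1$ at the formula $A \I B$ (not just non-redundancy of $T_1$ globally) has to be invoked, and where the asymmetry between the two inclusions $C \subseteq A$ and $A \subseteq \Cl[T_1]{C}$ must be resolved. I expect the clean way to package this is: among all formulas of $T_2$ whose antecedent is contained in $A$, their antecedents all close up under $T_1$ to a common value (namely $\Cl[T_1]{A}$) once we know $A$ is not a $T_2$-model and $T_1 \equiv T_2$; and at least one of these antecedents must itself already equal $A$ up to $\symbPrEq[T_1]$, because otherwise the proof of $A \I B$ from $T_1$ could be rerouted to avoid $A\I B$.
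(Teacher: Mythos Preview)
Your approach has a genuine gap, and it lies exactly where you suspected: the choice of $C \I D$. By insisting on $C \subseteq A$ (which is what $||C \I D||^*_A \ne 1$ forces under globalization), you may simply miss every formula of $T_2$ whose antecedent is provably equivalent to $A$. Concretely, take the ordinary two-valued case with $Y=\{a,b,c,d\}$,
\[
T_1=\bigl\{\{a\}\I\{c\},\ \{a,b\}\I\{d\}\bigr\},\qquad
T_2=\bigl\{\{a\}\I\{c\},\ \{a,b,c\}\I\{d\}\bigr\}.
\]
Both theories are non-redundant and equivalent. For $A\I B=\{a,b\}\I\{d\}\in T_1$, the only $C\I D\in T_2$ with $C\subseteq A$ is $\{a\}\I\{c\}$ (and it also meets your extra condition $D\nsubseteq\Cl[T_2\setminus\{C\I D\}]{A}$), yet $\Cl[T_1]{\{a\}}=\{a,c\}\ne\{a,b,c,d\}=\Cl[T_1]{\{a,b\}}$, so $\{a\}\not\equiv_{T_1}\{a,b\}$. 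The witness the theorem guarantees is $\{a,b,c\}\I\{d\}$, whose antecedent is \emph{not} contained in $A$. Your hand-wavy argument for $T_1\vdash C\I A$ cannot be repaired here, because the conclusion is simply false for the $C$ you selected.

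The paper's proof avoids this by never restricting to $C\subseteq A$. Instead it takes a \emph{minimal} subset $T'_2\subseteq T_2$ with $T'_2\vdash A\I B$ and argues (by non-redundancy of $T_1$) that some $C\I D\in T'_2$ satisfies $T_1\setminus\{A\I B\}\nvdash C\I D$: if every formula of $T'_2$ were provable from $T_1\setminus\{A\I B\}$, then so would $A\I B$ be. With this choice, Theorem~\ref{th:AIC} is applied twice, once to $T'_2$ (from $T'_2\vdash A\I B$ and $T'_2\setminus\{C\I D\}\nvdash A\I B$ one gets $T'_2\vdash A\I C$, hence $T_1\vdash A\I C$) and once to $T_1$ (from $T_1\vdash C\I D$ and $T_1\setminus\{A\I B\}\nvdash C\I D$ one gets $T_1\vdash C\I A$). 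The crucial link you were missing is precisely selecting $C\I D$ by the criterion ``not provable from $T_1\setminus\{A\I B\}$'' rather than by the semantic criterion ``violated in $A$''.
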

\begin{proof}
  Take $A \I B \in T_1$. Since $T_1$ and $T_2$ are equivalent,
  we get $T_2 \vdash A \I B$. Therefore, there is
  a proof of $A \I B$ by $T_2$ which uses finitely many
  formulas $\varphi_1,\dots,\varphi_k$ in $T_2$.
  In addition, we can select a subset
  $T'_2 \subseteq \{\varphi_1,\dots,\varphi_k\}$ such that
  $T'_2 \vdash A \I B$ and $T'' \nvdash A \I B$
  for all $T'' \subset T'_2$. Observe that since $A \I B \in T_1$ and
  $T_1$ is non-redundant, then $A \I B$ is non-trivial. Therefore,
  $T'_2$ is non-empty. Clearly, we get $T_1 \vdash \varphi$
  for each $\varphi \in T'_2$ since $T_1$ and $T_2$ are equivalent.

  We now claim that there is $C \I D \in T'_2$ such that 
  each proof of $C \I D$ by $T_1$ contains $A \I B$, i.e.,
  $T_1 \setminus \{A \I B\} \nvdash C \I D$.
  By contradiction, assume that
  $T_1 \setminus \{A \I B\} \vdash C \I D$
  for each $C \I D \in T'_2$.
  Since $T'_2 \vdash A \I B$, we would get that
  $T_1 \setminus \{A \I B\} \vdash A \I B$ which contradicts the fact
  that $T_1$ is non-redundant.

  Finally, by $T'_2 \vdash A \I B$ and
  $T'_2 \setminus \{C \I D\} \nvdash A \I B$, we get that
  $T'_2 \vdash A \I C$ by Theorem~\ref{th:AIC} and thus
  $T_1 \vdash A \I C$.
  Moreover, $C \I D \in T'_2$ implies that $T_1 \vdash C \I D$.
  Using the fact that $T_1 \setminus \{A \I B\} \nvdash C \I D$ and
  Theorem~\ref{th:AIC}, we get $T_1 \vdash C \I A$.
  Altogether, $T_1 \vdash A \I C$ and $T_1 \vdash C \I A$
  give $\PrEq[T_1]{A}{C}$.
\end{proof}

\begin{figure}
  \centering
  \tikzset{graph vertices/.style={%
      nodes={font=\small, inner sep=3pt,minimum width=5em},
      row sep=2em, column sep=1.5em}}%
  \tikzset{graph edges/.style={thick}}%
  \def\SE#1#2{\ensuremath{\bigl\{#1\bigr\}}}
  \begin{tikzpicture}
    \matrix [graph vertices] {
      & \node (a) {\SE{x,y,z}{a}}; \\
      \node (b) {\SE{{}^{0.5\!}/x,z}{b}}; &
      & \node (c) {\SE{x,{}^{0.5\!}/z}{c}}; \\
      & \node (d) {\SE{{}^{0.5\!}/x,{}^{0.5\!}/z}{d}}; &
      & \node (e) {\SE{x}{e}}; \\
      \node (f) {\SE{{}^{0.5\!}/z}{f}}; &
      & \node (g) {\SE{{}^{0.5\!}/x}{g}}; \\
      & \node [minimum width=2em] (h) {\SE{}{h}}; \\
    };
    \path [graph edges]
    (a) edge (b)
    (a) edge (c)
    (b) edge (d)
    (c) edge (d)
    (c) edge (e)
    (d) edge (f)
    (d) edge (g)
    (e) edge (g)
    (f) edge (h)
    (g) edge (h);
  \end{tikzpicture}
  \caption{Lattice of models of theories $T_1$ and $T_2$ from Example~\ref{ex:1}}
  \label{fig:mod}
\end{figure}

\begin{example}\label{ex:2}
  Let us consider the same structure of truth degrees as in Example~\ref{ex:1}
  and let $Y = \{x,y,z\}$. One can check that the following theories
  \begin{align*}
    T_1 =
    \bigl\{&\{{}^{0.5\!}/y\} \I \{x,{}^{0.5\!}/y,z\},
    \{z\} \I \{{}^{0.5\!}/x\},
    \{x,z\} \I \{{}^{0.5\!}/x,y,{}^{0.5\!}/z\}\bigr\},
    \\
    T_2 =
    \bigl\{&\{z\} \I \{{}^{0.5\!}/x\},
    \{{}^{0.5\!}/x,y,{}^{0.5\!}/z\} \I \{x,{}^{0.5\!}/y,{}^{0.5\!}/z\},
    \\[-2pt]
    &\{{}^{0.5\!}/y\} \I \{{}^{0.5\!}/x,{}^{0.5\!}/y,z\},
    \{{}^{0.5\!}/x,{}^{0.5\!}/y\} \I \{y,z\},
    \\[-2pt]
    &\{x,z\} \I \{{}^{0.5\!}/x,y,{}^{0.5\!}/z\}\bigr\}
  \end{align*}
  are non-redundant and equivalent. Indeed, one can check the fact either
  by showing that each formula in $T_1$ is provable by $T_2$ and
  \emph{vice versa} which is straightforward but tedious, or one can show
  that both $T_1$ and $T_2$ have the same models. Figure~\ref{fig:mod}
  shows the set of all models of either of the theories ordered by
  the inclusion of $\mathbf{L}$-sets.

  Now, for each formula in $T_1$ there is a formula in $T_2$ which has
  the same antecedent. Thus, in this direction, the consequence of
  Theorem~\ref{th:PrEq} is immediate. On the other hand, for
  $\{{}^{0.5\!}/x,{}^{0.5\!}/y\} \I \{y,z\} \in T_2$ there
  is no formula in $T_1$ with the antecedent equal to
  $\{{}^{0.5\!}/x,{}^{0.5\!}/y\}$. Nevertheless, we have
  \begin{align*}
    &\PrEq[T_2]{\{{}^{0.5\!}/x,{}^{0.5\!}/y\}}{\{{}^{0.5\!}/y\}}, \\
    &\PrEq[T_2]{\{{}^{0.5\!}/x,{}^{0.5\!}/y\}}{\{x,z\}},
  \end{align*}
  i.e., one can take 
  $\{{}^{0.5\!}/y\} \I \{x,{}^{0.5\!}/y,z\} \in T_1$ or
  $\{x,z\} \I \{{}^{0.5\!}/x,y,{}^{0.5\!}/z\} \in T_1$
  and the same applies to $\{{}^{0.5\!}/x,y,{}^{0.5\!}/z\} \I
  \{x,{}^{0.5\!}/y,{}^{0.5\!}/z\} \in T_2$.

  As in Example~\ref{ex:1}, it can be shown that Theorem~\ref{th:PrEq}
  cannot be extended to general hedges. For instance, consider the following
  theories
  \begin{align*}
    T_3 =
    \bigl\{&\{\} \I \{y,{}^{0.5\!}/z\}\bigr\},
    \\
    T_4 =
    \bigl\{&\{{}^{0.5\!}/y\} \I \{y\},
    \{z\} \I \{{}^{0.5\!}/y,{}^{0.5\!}/z\},
    \{\} \I \{{}^{0.5\!}/z\}\bigr\},
  \end{align*}
  and let $\mathbf{L}$ be the three-element G\"odel chain with ${}^*$
  being the identity. That is, $\mathbf{L}$ is defined as in Example~\ref{ex:1}
  except that $\otimes$ coincides with $\wedge$,
  \begin{align*}
    a \rightarrow b &=
    \left\{
      \begin{array}{@{\,}l@{\quad}l@{}}
        1, & \text{if } a \leq b, \\
        b, & \text{otherwise,}
      \end{array}
    \right.
  \end{align*}
  and $0.5^* = 0.5$. In this setting, $T_3$ and $T_4$ are both
  non-redundant and equivalent. Now,
  for $\{z\} \I \{{}^{0.5\!}/y,{}^{0.5\!}/z\} \in T_4$
  there is no formula in $T_3$ whose antecedent is equivalent to $\{z\}$.
  Indeed, $T_3 \nvdash \{\} \I \{z\}$ on account of soundness
  because $\{y,{}^{0.5\!}/z\}$ is a model of $T_3$.
\end{example}

The relationship between formulas based on equivalence of their
antecedents is crucial for our investigation. We therefore introduce the
following notation. For each $A \in L^Y$ and theory $T$, put
\begin{align}
  \ET{A} &= \{C \I D \in T \,|\, \PrEq{A}{C}\},
  \label{def:ETA}
\end{align}
i.e., $\ET{A}$ is a subset of $T$ containing all formulas whose antecedent
is equivalent to $A$. For particular $A$ and $T$, we may
have $\ET{A} = \emptyset$. The collection of all non-empty subsets
of the form~\eqref{def:ETA} for $A \in L^Y$ is obviously a partition of $T$.
We denote the partition by $\ETs$, i.e.,
\begin{align}
  \ETs &= \{\ET{A} \,|\, \ET{A} \ne \emptyset \text{ and } A \in L^Y\}.
  \label{def:ETs}
\end{align}
The partitions~\eqref{def:ETs} and their classes~\eqref{def:ETA} are
illustrated by the following example.

\begin{example}\label{ex:3}
  Consider theories $T_1$ and $T_2$ from Example~\ref{ex:2} and
  the structure of degrees considered therein.
  In case of $T_1$, the partition $\ETs[T_1]$ given by~\eqref{def:ETs}
  consists of two distinct subsets of $T_1$. Namely,
  \begin{align*}
    \ETs[T_1] &=
    \{
    \{\{{}^{0.5\!}/y\} \I \{x,{}^{0.5\!}/y,z\},
    \{x,z\} \I \{{}^{0.5\!}/x,y,{}^{0.5\!}/z\}\},
    \{\{z\} \I \{{}^{0.5\!}/x\}\}
    \}.
  \end{align*}
  In case of $T_2$, we get:
  \begin{align*}
    \ETs[T_2] =
    \{\{&
    \{{}^{0.5\!}/x,y,{}^{0.5\!}/z\} \I \{x,{}^{0.5\!}/y,{}^{0.5\!}/z\},
    \{{}^{0.5\!}/y\} \I \{{}^{0.5\!}/x,{}^{0.5\!}/y,z\}, \\[-2pt]
    &\{{}^{0.5\!}/x,{}^{0.5\!}/y\} \I \{y,z\},
    \{x,z\} \I \{{}^{0.5\!}/x,y,{}^{0.5\!}/z\}\},
    \{\{z\} \I \{{}^{0.5\!}/x\}\}\}.
  \end{align*}
  Observe that $\ET[T_1]{\{z\}} = \ET[T_2]{\{z\}}$ whereas
  $\ET[T_1]{\{x,z\}} \ne \ET[T_2]{\{x,z\}}$.
\end{example}

\begin{remark}\label{rem:PrEq}
  Obviously, if $\PrEq{A}{B}$, then $\ET{A} = \ET{B}$. Conversely,
  if $\ET{A} = \ET{B} \ne \emptyset$, then there is
  $C \I D \in \ET{A} = \ET{B}$ and thus
  $\PrEq{A}{C}$ and $\PrEq{B}{C}$, i.e., we get $\PrEq{A}{B}$.
  Note that the assumption on $\ET{A}$ and $\ET{B}$ being
  non-empty used in the latter claim cannot be dropped.
  For instance, for $T = \emptyset$ and $A,B \in L^Y$ such
  that $A \nsubseteq B$ and $B \nsubseteq A$,
  we get $\ET{A} = \ET{B} = \emptyset$ and $\emptyset \nvdash A \I B$.
\end{remark}

The following assertions show that despite the fact that equivalent
non-redundant theories can differ in their size, the corresponding
partitions~\eqref{def:ETs} have always the same number of classes.

\begin{theorem}\label{th:ETs}
  Let $T_1$ and $T_2$ be equivalent and non-redundant.
  Then for any $A,B \in L^Y$, the following conditions hold:
  \begin{enumerate}
  \item[\rm (i)]
    If $\ET[T_1]{A} \ne \emptyset$, then $\ET[T_2]{A} \ne \emptyset$.
  \item[\rm (ii)]
    If $\ET[T_1]{A} = \ET[T_1]{B} \ne \emptyset$, then
    $\ET[T_2]{A} = \ET[T_2]{B} \ne \emptyset$.
  \end{enumerate}
\end{theorem}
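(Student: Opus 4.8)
The plan is to derive both parts of Theorem~\ref{th:ETs} directly from Theorem~\ref{th:PrEq}, using the translation between $\symbPrEq$ and the classes $\ET{\cdot}$ recorded in Remark~\ref{rem:PrEq}. For part (i), suppose $\ET[T_1]{A}\ne\emptyset$, so there is some $A\I B\in T_1$ with $\PrEq[T_1]{A}{A}$ trivially, i.e.\ $A\I B\in\ET[T_1]{A}$. Applying Theorem~\ref{th:PrEq} to the equivalent non-redundant pair $T_1,T_2$ yields $C\I D\in T_2$ with $\PrEq[T_1]{A}{C}$. The key point is that provable equivalence under $T_1$ coincides with provable equivalence under $T_2$: since $T_1$ and $T_2$ are equivalent, $T_1\vdash A\I C$ iff $T_2\vdash A\I C$ and likewise for $C\I A$, hence $\PrEq[T_1]{A}{C}$ iff $\PrEq[T_2]{A}{C}$. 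Therefore $C\I D\in\ET[T_2]{A}$, so $\ET[T_2]{A}\ne\emptyset$.

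For part (ii), assume $\ET[T_1]{A}=\ET[T_1]{B}\ne\emptyset$. By Remark~\ref{rem:PrEq} (the non-empty case), this gives $\PrEq[T_1]{A}{B}$, which by the equivalence of $T_1$ and $T_2$ upgrades to $\PrEq[T_2]{A}{B}$. Now apply part (i) to conclude $\ET[T_2]{A}\ne\emptyset$, and then invoke Remark~\ref{rem:PrEq} in the forward direction: from $\PrEq[T_2]{A}{B}$ we get $\ET[T_2]{A}=\ET[T_2]{B}$, and this common set is non-empty. That is exactly the claim.

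The only subtlety worth spelling out—and the step I expect to need the most care—is the interchange of the subscript in $\symbPrEq$ between $T_1$ and $T_2$. Definition~\ref{def:PrEq} ties $\PrEq[T_i]{\cdot}{\cdot}$ to provability from $T_i$, and Remark~\ref{rem:PrEq} and~\eqref{def:ETA} are stated relative to a single fixed theory; so one must explicitly note that equivalence of $T_1$ and $T_2$ makes the relations $\symbPrEq[T_1]$ and $\symbPrEq[T_2]$ literally the same relation on $L^Y$, and similarly $\ET[T_1]{A}$ is cut out by a condition ($\PrEq[T_1]{A}{C}$) that could be phrased with either subscript. Once that observation is in place, everything else is a direct substitution into Theorem~\ref{th:PrEq} and Remark~\ref{rem:PrEq}, with no computation. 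I would present part (i) in full, then obtain part (ii) as a short corollary of part (i) together with Remark~\ref{rem:PrEq}, rather than re-running the argument from scratch.
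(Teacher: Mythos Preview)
Your approach is essentially the same as the paper's: part~(i) via Theorem~\ref{th:PrEq} plus the observation that $\symbPrEq[T_1]$ and $\symbPrEq[T_2]$ coincide for equivalent theories, and part~(ii) as a corollary of~(i) together with Remark~\ref{rem:PrEq}.

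One small slip in part~(i): from $\ET[T_1]{A}\ne\emptyset$ you cannot conclude that there is a formula in $T_1$ whose antecedent is literally $A$; you only get some $C'\I D'\in T_1$ with $\PrEq[T_1]{A}{C'}$. Applying Theorem~\ref{th:PrEq} to $C'\I D'$ then gives $G\I H\in T_2$ with $\PrEq[T_1]{C'}{G}$, and you need a transitivity step to reach $\PrEq[T_1]{A}{G}$ before switching subscripts. The paper makes this transitivity explicit; once you insert it, your argument matches the paper's exactly.
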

\begin{proof}
  In order to prove (i), observe that if
  $\ET[T_1]{A} \ne \emptyset$ for $A \in L^Y$, then there is $C \I D \in T_1$
  such that $\PrEq[T_1]{A}{C}$.
  Theorem~\ref{th:PrEq} yields there is $G \I H \in T_2$
  such that $\PrEq[T_1]{C}{G}$. Since $\symbPrEq[T_1]$ is transitive,
  we get $\PrEq[T_1]{A}{G}$. As a consequence, $\PrEq[T_2]{A}{G}$ because
  $T_1$ and $T_2$ are equivalent. Therefore, $G \I H \in \ET[T_2]{A}$
  and so we have $\ET[T_2]{A} \ne \emptyset$.

  Now, (ii) is a consequence of (i) and the argument
  in Remark~\ref{rem:PrEq}. Indeed, if
  $\ET[T_1]{A} = \ET[T_1]{B} \ne \emptyset$, then $\PrEq[T_1]{A}{B}$
  and so $\PrEq[T_2]{A}{B}$ because $T_1$ and $T_2$ are equivalent.
  As a consequence, $\ET[T_2]{A} = \ET[T_2]{B} \ne \emptyset$
  on account of (i).
\end{proof}

\begin{corollary}
  Let $T_1$ and $T_2$ be equivalent and non-redundant theories.
  Then, $|\ETs[T_1]| = |\ETs[T_2]|$.
\end{corollary}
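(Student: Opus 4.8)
The plan is to derive the corollary directly from Theorem~\ref{th:ETs} by exhibiting a bijection between the partition blocks $\ETs[T_1]$ and $\ETs[T_2]$. First I would set up a well-defined map $\Phi\colon \ETs[T_1] \to \ETs[T_2]$ as follows: given a block $\mathrm{E}_{T_1}(A) \in \ETs[T_1]$ (which is non-empty by definition of $\ETs[T_1]$), send it to $\mathrm{E}_{T_2}(A)$. By Theorem~\ref{th:ETs}(i), $\mathrm{E}_{T_2}(A) \ne \emptyset$, so $\mathrm{E}_{T_2}(A)$ really is a member of $\ETs[T_2]$. The subtle point — and the step I expect to be the main obstacle, though it is a mild one — is checking that $\Phi$ is well defined, i.e., independent of the representative $A$ chosen for the block. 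Suppose $\mathrm{E}_{T_1}(A) = \mathrm{E}_{T_1}(B)$ as blocks; since these blocks are non-empty, Remark~\ref{rem:PrEq} gives $\PrEq[T_1]{A}{B}$, and then Theorem~\ref{th:ETs}(ii) yields $\mathrm{E}_{T_2}(A) = \mathrm{E}_{T_2}(B)$. Hence $\Phi$ is well defined.

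Next I would observe that, by the symmetric roles of $T_1$ and $T_2$ (equivalence and non-redundancy are symmetric hypotheses), the analogous map $\Psi\colon \ETs[T_2] \to \ETs[T_1]$ sending $\mathrm{E}_{T_2}(A)$ to $\mathrm{E}_{T_1}(A)$ is also well defined. It is then immediate that $\Psi \circ \Phi$ is the identity on $\ETs[T_1]$ and $\Phi \circ \Psi$ is the identity on $\ETs[T_2]$: applying $\Phi$ then $\Psi$ to $\mathrm{E}_{T_1}(A)$ returns $\mathrm{E}_{T_1}(A)$ by construction. Therefore $\Phi$ is a bijection, and $|\ETs[T_1]| = |\ETs[T_2]|$.

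An essentially equivalent and perhaps cleaner way to phrase the same argument, which I might prefer for brevity, is this: define a relation between the blocks of $\ETs[T_1]$ and $\ETs[T_2]$ by declaring $\mathrm{E}_{T_1}(A)$ related to $\mathrm{E}_{T_2}(B)$ whenever $\PrEq[T_1]{A}{B}$ (equivalently $\PrEq[T_2]{A}{B}$, since $T_1 \equiv T_2$). Theorem~\ref{th:ETs}(i) guarantees every block on each side is related to at least one block on the other side, while parts (i) and (ii) together with Remark~\ref{rem:PrEq} guarantee that two blocks on the same side related to a common block on the other side must coincide; this shows the relation is the graph of a bijection. Either formulation is short, so I would write out the first (explicit inverse maps) since it makes the bijection completely transparent and relies only on the already-proved Theorem~\ref{th:ETs} and Remark~\ref{rem:PrEq}.
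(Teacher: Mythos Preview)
Your proposal is correct and follows essentially the same approach as the paper: both define the natural map $\ET[T_1]{A} \mapsto \ET[T_2]{A}$ and use Theorem~\ref{th:ETs} to check it is well defined, with the paper verifying injectivity (and implicitly invoking symmetry for equality) while you explicitly construct the inverse $\Psi$. One minor simplification: in your well-definedness check you route through Remark~\ref{rem:PrEq} to obtain $\PrEq[T_1]{A}{B}$, but Theorem~\ref{th:ETs}(ii) already gives $\ET[T_2]{A} = \ET[T_2]{B}$ directly from $\ET[T_1]{A} = \ET[T_1]{B} \ne \emptyset$.
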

\begin{proof}
  Theorem~\ref{th:ETs} allows us to consider
  a map $h\!: \ETs[T_1] \to \ETs[T_2]$ which is defined
  by $h(\ET[T_1]{A}) = \ET[T_2]{A}$, for each $A \in L^Y$
  such that $\ET[T_1]{A} \ne \emptyset$. Indeed, Theorem~\ref{th:ETs}
  ensures that the map is well defined. In addition, the map is injective  
  because $h(\ET[T_1]{A}) = h(\ET[T_1]{B})$
  means $\ET[T_2]{A} = \ET[T_2]{B} \ne \emptyset$
  and so $\ET[T_1]{A} = \ET[T_1]{B}$ using Theorem~\ref{th:ETs}\,(ii)
  with $T_1$ and $T_2$ interchanged. Hence, $|\ETs[T_1]| = |\ETs[T_2]|$
  follows by the existence of such $h$.
\end{proof}

\begin{remark}
  Example~\ref{ex:3} showed one particular case of two theories $T_1$
  and $T_2$ such that $|T_1| \ne |T_2|$ but $|\ETs[T_1]| = |\ETs[T_2]|$.
  Again, in case of general hedges, the previous observations do not hold.
  As an example, one may take $T_3$ and $T_4$ from Example~\ref{ex:2}
  considering the three-element G\"odel chain with identity as the hedge.
\end{remark}

In order to get further insight into the structure of non-redundant theories,
we introduce a particular notion of provability which is stronger than
the one we have considered so far. The notion is an analog of the direct
determination~\cite{Mai:TRD} established in the framework of the classic
functional dependencies.

\begin{definition}\label{def:dirdet}
  Let $T$ be a theory, $A,B \in L^Y$.
  We say that $A \I B$ is directly provable by $T$, written $\dipr A \I B$,
  whenever
  \begin{align}
    T \setminus \ET{A} \vdash A \I B.
    \label{eqn:dirdet}
  \end{align}
\end{definition}

\begin{remark}
  Obviously, $\dipr[]$ is stronger than $\vdash$.
  If $A \I B$ is trivial then $\dipr A \I B$ for arbitrary $T$
  since for $B \subseteq A$, we have $\emptyset \vdash A \I B$.
  In particular, $\dipr A \I A$. In general, $\dipr[]$ and $\vdash$
  do not coincide. For instance, consider $T = \{A \I B\}$ where $A \I B$
  is non-trivial, i.e., $B \nsubseteq A$. In that case, $\ET{A} = \{A \I B\}$,
  i.e., $T \setminus \ET{A} \nvdash A \I B$, meaning that $\ndipr A \I B$.
\end{remark}

\begin{example}\label{ex:4}
  Take $T_2$ from Example~\ref{ex:2} and let $Y = \{x,y,z\}$.
  The total number of formulas (using $Y$ as the set of all attributes)
  which are provable by $T_2$ is $543$, among those are $327$
  non-trivial ones.
  The number of formulas which are directly provable
  by $T_2$ is considerably lower. Namely, only $231$ formulas are
  directly provable by $T_2$. Moreover, only $15$ of them are
  non-trivial ones. Namely,
  \begin{align*}
    T' = \bigl\{
    &\{{}^{0.5\!}/y,z\} \I \{{}^{0.5\!}/x\},
    \{{}^{0.5\!}/y,z\} \I \{{}^{0.5\!}/x,{}^{0.5\!}/z\},
    \{{}^{0.5\!}/y,z\} \I \{{}^{0.5\!}/x,z\}, \\[-2pt]
    &\{{}^{0.5\!}/y,z\} \I \{{}^{0.5\!}/x,{}^{0.5\!}/y\}, 
    \{{}^{0.5\!}/y,z\} \I \{{}^{0.5\!}/x,{}^{0.5\!}/y,{}^{0.5\!}/z\},
    \\[-2pt]
    &\{{}^{0.5\!}/y,z\} \I \{{}^{0.5\!}/x,{}^{0.5\!}/y,z\},
    \{y,z\} \I \{{}^{0.5\!}/x\},
    \{y,z\} \I \{{}^{0.5\!}/x,{}^{0.5\!}/z\}, \\[-2pt]
    &\{y,z\} \I \{{}^{0.5\!}/x,z\},
    \{y,z\} \I \{{}^{0.5\!}/x,{}^{0.5\!}/y\},
    \{y,z\} \I \{{}^{0.5\!}/x,{}^{0.5\!}/y,{}^{0.5\!}/z\}, \\[-2pt]
    &\{y,z\} \I \{{}^{0.5\!}/x,{}^{0.5\!}/y,z\},
    \{y,z\} \I \{{}^{0.5\!}/x,y\},
    \{y,z\} \I \{{}^{0.5\!}/x,y,{}^{0.5\!}/z\}, \\[-2pt]
    &\{y,z\} \I \{{}^{0.5\!}/x,y,z\}\bigr\}
  \end{align*}
  is the set of all non-trivial formulas which are directly
  provable by $T_2$.
\end{example}

The following assertion shows that theories equivalent in terms of $\vdash$
are also equivalent in terms of $\dipr[]$. That means, when considering
direct provability, one may replace a theory by an equivalent one.

\begin{theorem}\label{th:le55}
  Let $T_1$ and $T_2$ be equivalent.
  If $\dipr[T_1] A \I B$, then $\dipr[T_2] A \I B$.
\end{theorem}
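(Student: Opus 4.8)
The plan is to unwind the definition of direct provability via $\ET{A}$ and show that the ``witness set'' $T_1 \setminus \ET[T_1]{A}$ can be matched by a suitable subset of $T_2$. Concretely, suppose $\dipr[T_1] A \I B$, i.e., $T_1 \setminus \ET[T_1]{A} \vdash A \I B$. I would first show that $T_2 \setminus \ET[T_2]{A} \vdash \varphi$ for every $\varphi \in T_1 \setminus \ET[T_1]{A}$; once this is established, a routine proof-concatenation argument (using transitivity of $\vdash$ via cut, i.e., replacing each use of a $T_1$-formula in the proof of $A \I B$ by its $T_2 \setminus \ET[T_2]{A}$-derivation) yields $T_2 \setminus \ET[T_2]{A} \vdash A \I B$, which is exactly $\dipr[T_2] A \I B$.

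So the heart of the matter is: if $C \I D \in T_1$ and $C \I D \notin \ET[T_1]{A}$ (equivalently $\PrEq[T_1]{A}{C}$ fails), then $T_2 \setminus \ET[T_2]{A} \vdash C \I D$. Since $T_1$ and $T_2$ are equivalent, $T_2 \vdash C \I D$, so there is a finite proof using formulas $\varphi_1,\dots,\varphi_k \in T_2$. The key claim is that none of these $\varphi_i$ needs to lie in $\ET[T_2]{A}$ — or rather, that we can find a proof of $C \I D$ from $T_2$ avoiding $\ET[T_2]{A}$ entirely. Here I would argue by contradiction along the lines already used in the proof of Theorem~\ref{th:PrEq}: pass to a minimal subset $T'_2 \subseteq \{\varphi_1,\dots,\varphi_k\}$ with $T'_2 \vdash C \I D$ and $T'' \nvdash C \I D$ for all $T'' \subset T'_2$. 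If some $E \I F \in T'_2$ lies in $\ET[T_2]{A}$, then by minimality $T'_2 \setminus \{E \I F\} \nvdash C \I D$, so by Theorem~\ref{th:AIC} we get $T'_2 \vdash C \I E$; since $E \I F \in \ET[T_2]{A}$ means $\PrEq[T_2]{A}{E}$, transitivity of $\symbPrEq[T_2]$ then forces $\PrEq[T_2]{A}{C}$, hence $\PrEq[T_1]{A}{C}$ by equivalence of $T_1$ and $T_2$ — contradicting $C \I D \notin \ET[T_1]{A}$. Therefore $T'_2 \cap \ET[T_2]{A} = \emptyset$, so $T'_2 \subseteq T_2 \setminus \ET[T_2]{A}$ and $T_2 \setminus \ET[T_2]{A} \vdash C \I D$, as desired.

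I expect the main obstacle to be a subtle point in the contradiction step: one must be careful that Theorem~\ref{th:AIC} applies, which requires $T'_2 \vdash C \I D$ together with $T'_2 \setminus \{E \I F\} \nvdash C \I D$ — both of which are guaranteed by the choice of $T'_2$ as a minimal witness — and one must also handle the degenerate case where $C \I D$ is trivial (then $\emptyset \vdash C \I D$ and there is nothing to prove, since $\emptyset \subseteq T_2 \setminus \ET[T_2]{A}$). A second point requiring care is the final proof-splicing: strictly speaking one substitutes, into a fixed proof of $A \I B$ from $T_1 \setminus \ET[T_1]{A}$, the derivations just obtained for each of its non-axiom, non-hypothesis steps; since $\vdash$ is transitive and proofs compose, the result is a valid proof of $A \I B$ from $T_2 \setminus \ET[T_2]{A}$. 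Everything else is bookkeeping with the definitions of $\ET{A}$, $\symbPrEq$, and $\dipr[]$.
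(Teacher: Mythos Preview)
There is a genuine gap in the contradiction step. From $T'_2 \vdash C \I E$ (hence $T_2 \vdash C \I E$) and $\PrEq[T_2]{A}{E}$ you may conclude $T_2 \vdash C \I A$ via (Tra), but you do \emph{not} obtain $T_2 \vdash A \I C$, so the claimed ``transitivity of $\symbPrEq[T_2]$ forces $\PrEq[T_2]{A}{C}$'' is unjustified. In fact your intermediate claim---that every $\varphi \in T_1 \setminus \ET[T_1]{A}$ is provable from $T_2 \setminus \ET[T_2]{A}$---is false. Take $Y=\{a,b,c\}$ (with the two-element $\mathbf{L}$, say), $T_1 = \{\{c\}\I\{a,b\},\,\{a\}\I\{b\}\}$, $T_2 = \{\{c\}\I\{a\},\,\{a\}\I\{b\}\}$, and $A=\{a\}$. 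These theories are equivalent, $\ET[T_1]{A}=\ET[T_2]{A}=\{\{a\}\I\{b\}\}$, and $\{c\}\I\{a,b\}\in T_1\setminus\ET[T_1]{A}$; yet $T_2\setminus\ET[T_2]{A}=\{\{c\}\I\{a\}\}\nvdash\{c\}\I\{a,b\}$ since the closure of $\{c\}$ under this single rule is $\{a,c\}$.

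The missing idea, which the paper supplies, is to first pass to a \emph{minimal} $S \subseteq T_1 \setminus \ET[T_1]{A}$ with $S \vdash A \I B$ and argue only about formulas $C \I D \in S$. Minimality gives $S \setminus \{C \I D\} \nvdash A \I B$, so Theorem~\ref{th:AIC} applied on the $T_1$ side yields $S \vdash A \I C$, hence $T_1 \vdash A \I C$. This is precisely the direction you are missing: combined with the $T_2 \vdash C \I A$ that your argument does produce, it closes the equivalence $\PrEq[T_1]{A}{C}$ and delivers the contradiction with $C \I D \in S \subseteq T_1 \setminus \ET[T_1]{A}$. In the counterexample above, $\{c\}\I\{a,b\}$ can never lie in such a minimal $S$ (for any $B$), exactly because $T_1 \nvdash \{a\} \I \{c\}$. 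Your overall architecture is sound; you just need this extra application of Theorem~\ref{th:AIC} on the $T_1$ side before running the contradiction argument on the $T_2$ side.
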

\begin{proof}
  Assume that $\dipr[T_1] A \I B$ and take minimal
  $S \subseteq T_1 \setminus \ET[T_1]{A}$
  such that $S \vdash A \I B$, i.e., $A \I B$ is not provable by
  any proper subset of $S$.
  Now it suffices to show that each formula in $S$ is provable by
  $T_2 \setminus \ET[T_2]{A}$. Indeed, by $S \vdash A \I B$ we then
  conclude that $A \I B$ is provable by $T_2 \setminus \ET[T_2]{A}$.

  Thus, take any $C \I D \in S$. Since $S \vdash A \I B$ and
  $S \setminus \{C \I D\} \nvdash A \I B$, which is a consequence of the
  minimality of $S$, Theorem~\ref{th:AIC} gives $S \vdash A \I C$. That is,
  $T_1 \vdash A \I C$ on account of $S \subseteq T_1$.
  
  By contradiction, assume that $C \I D$ is not provable by
  $T_2 \setminus \ET[T_2]{A}$. Since it is obviously provable by $T_2$
  ($T_2$ is equivalent to $T_1$ and $S \subseteq T_1$), it means that each
  proof of $C \I D$ by $T_2$ contains a formula in $\ET[T_2]{A}$.
  Let $R$ be a minimal subset of $T_2$ such that $R \vdash C \I D$.
  By the minimality of $R$ and utilizing the fact that each proof of $C \I D$
  by $R$ contains a formula in $\ET[T_2]{A}$, it follows that there is
  $G \I H \in \ET[T_2]{A}$ such that
  $R \setminus \{G \I H\} \nvdash C \I D$. By Theorem~\ref{th:AIC},
  we get $R \vdash C \I G$ which further gives $T_2 \vdash C \I G$.
  Moreover, $G \I H \in \ET[T_2]{A}$ gives $T_2 \vdash G \I A$.
  Hence, by $T_2 \vdash C \I G$ and $T_2 \vdash G \I A$,
  we get $T_2 \vdash C \I A$, i.e., $T_1 \vdash C \I A$.
  Taking into account the assumption $T_1 \vdash A \I C$
  from the previous paragraph,
  we conclude that $\PrEq[T_1]{A}{C}$. The latter observation means that
  $C \I D \in \ET[T_1]{A}$ which contradicts the fact that
  $C \I D \in S \subseteq T_1 \setminus \ET[T_1]{A}$.
\end{proof}

\begin{corollary}
  Let $T_1$ and $T_2$ be equivalent.
  Then, for any $A,B \in L^Y$, we have
  $\dipr[T_1] A \I B$ if{}f\/ $\dipr[T_2] A \I B$.
  \qed
\end{corollary}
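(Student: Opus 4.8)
The plan is to derive the corollary directly from Theorem~\ref{th:le55} by exploiting the symmetry of the hypothesis. Since $T_1$ and $T_2$ are equivalent, equivalence of theories is a symmetric relation, so $T_2$ and $T_1$ are equivalent as well. Applying Theorem~\ref{th:le55} with the roles of $T_1$ and $T_2$ interchanged yields: if $\dipr[T_2] A \I B$, then $\dipr[T_1] A \I B$. Combining this with the original direction of Theorem~\ref{th:le55} gives the biconditional $\dipr[T_1] A \I B$ if{}f $\dipr[T_2] A \I B$ for all $A, B \in L^Y$, which is exactly the claim.

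Written out, the argument is: fix $A, B \in L^Y$. For the forward implication, assume $\dipr[T_1] A \I B$; then $\dipr[T_2] A \I B$ is immediate from Theorem~\ref{th:le55}. For the converse, assume $\dipr[T_2] A \I B$; since the relation ``$T_1$ and $T_2$ are equivalent'' is symmetric, we may apply Theorem~\ref{th:le55} with $T_1$ and $T_2$ swapped to conclude $\dipr[T_1] A \I B$. Hence the two notions of direct provability coincide.

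There is essentially no obstacle here: the whole content of the corollary is carried by Theorem~\ref{th:le55}, and the corollary merely packages that one-directional statement into its symmetric form. The only thing to be careful about is to note explicitly that equivalence of theories is symmetric (which is clear from Definition of equivalence, since it is stated via an ``if{}f'' between $T_1 \vdash A \I B$ and $T_2 \vdash A \I B$), so that Theorem~\ref{th:le55} can legitimately be invoked in both directions. This is exactly the kind of statement whose proof is replaced by \qed in the excerpt, and I would follow suit.
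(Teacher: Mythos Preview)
Your proposal is correct and matches the paper's intended argument: the corollary is marked with \qed\ precisely because it follows from Theorem~\ref{th:le55} by symmetry of equivalence, exactly as you describe.
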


For any theory $T$, it is easily seen that
by $T \vdash A \I B$ and $T \vdash B \I C$
one can infer $T \vdash A \I C$. This is an immediate consequence of
applying (Tra). An analogous rule of transitivity can also be proved in
case of $\dipr[]$:

\begin{lemma}\label{le:le56}
  If $\dipr A \I B$, and $\dipr B \I C$, then $\dipr A \I C$.
\end{lemma}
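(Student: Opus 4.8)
The plan is to reduce everything to transitivity of $\vdash$. Unfolding the definitions, we must show $T \setminus \ET{A} \vdash A \I C$ given that $T \setminus \ET{A} \vdash A \I B$ and $T \setminus \ET{B} \vdash B \I C$. Since $\vdash$ is transitive (an immediate consequence of (Tra), as observed just above the lemma), it is enough to establish the single auxiliary fact $T \setminus \ET{A} \vdash B \I C$: combining it with the hypothesis $T \setminus \ET{A} \vdash A \I B$ via (Tra) then yields $T \setminus \ET{A} \vdash A \I C$, i.e.\ $\dipr A \I C$.

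To obtain $T \setminus \ET{A} \vdash B \I C$, I would split according to whether $\PrEq{A}{B}$. If it holds, then $\ET{A} = \ET{B}$ by the first part of Remark~\ref{rem:PrEq}, so $T \setminus \ET{A} = T \setminus \ET{B}$ and the hypothesis $\dipr B \I C$, i.e.\ $T \setminus \ET{B} \vdash B \I C$, is already the desired statement. If $\PrEq{A}{B}$ fails, note first that $\dipr A \I B$ entails $T \vdash A \I B$, so the failure of provable equivalence must be caused by $T \nvdash B \I A$. Now pick a minimal $R \subseteq T \setminus \ET{B}$ with $R \vdash B \I C$ (such $R$ exists since $T$ is finite). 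I claim $R \cap \ET{A} = \emptyset$; together with $R \subseteq T$ this gives $R \subseteq T \setminus \ET{A}$, hence $T \setminus \ET{A} \vdash B \I C$ and we are done. Suppose otherwise, say $G \I H \in R \cap \ET{A}$. Minimality of $R$ forces $R \setminus \{G \I H\} \nvdash B \I C$, so Theorem~\ref{th:AIC} — applied with $B \I C$ in the role of ``$A \I B$'' and $G \I H$ in the role of ``$C \I D$'' — yields $R \vdash B \I G$, hence $T \vdash B \I G$. On the other hand, $G \I H \in \ET{A}$ means $\PrEq{A}{G}$, so in particular $T \vdash G \I A$, and transitivity of $\vdash$ gives $T \vdash B \I A$, contradicting $T \nvdash B \I A$.

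I expect the only genuine subtlety to be this second case: $\ET{A}$ and $\ET{B}$ are a priori unrelated, so a minimal proof of $B \I C$ that avoids $\ET{B}$ could in principle still use a formula of $\ET{A}$, and Theorem~\ref{th:AIC} is precisely what rules this out, by forcing such a formula to witness $T \vdash B \I A$ and hence $\PrEq{A}{B}$. A minor point to keep straight is that $\PrEq{A}{B}$ here is meant relative to $T$ (not relative to $T \setminus \ET{A}$), and that direct provability of $A \I B$ only delivers $T \vdash A \I B$, not $T \vdash B \I A$; this is why the case distinction is phrased through $\PrEq{A}{B}$ rather than through equality or disjointness of $\ET{A}$ and $\ET{B}$.
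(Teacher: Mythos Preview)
Your proof is correct and takes essentially the same approach as the paper: both reduce to showing $T \setminus \ET{A} \vdash B \I C$ and invoke Theorem~\ref{th:AIC} to argue that any formula of $\ET{A}$ essential to a proof of $B \I C$ would force $T \vdash B \I A$ and hence $\PrEq{A}{B}$. The only difference is organizational---you front-load the case $\PrEq{A}{B}$ and work with a minimal $R \subseteq T \setminus \ET{B}$, whereas the paper argues by contradiction from $T \setminus \ET{A} \nvdash B \I C$ and works with a minimal $T' \supseteq T \setminus \ET{A}$, arriving at $\PrEq{A}{B}$ (and thus $\ET{A}=\ET{B}$) as the contradiction.
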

\begin{proof}
  Clearly, the claim is trivial if $B \I C$ is a trivial formula,
  i.e., if $C \subseteq B$. Assume that $B \I C$ is non-trivial.
  Observe that if $T \setminus \ET{A} \vdash B \I C$, the claim follows
  directly by (Tra). So, it suffices to show that $B \I C$ is always provable
  by $T \setminus \ET{A}$. By way of contradiction, assume that
  $T \setminus \ET{A} \nvdash B \I C$. Since $T \vdash B \I C$,
  there are $T'$ and $D \I E \in \ET{A}$ such that
  $T \setminus \ET{A} \subset T' \subseteq T$, 
  $T' \vdash B \I C$, and $T' \setminus \{D \I E\} \nvdash B \I C$. 
  Using Theorem~\ref{th:AIC}, it follows that
  $T' \setminus \{D \I E\} \vdash B \I D$
  and so $T \vdash B \I A$ using (Tra) and the monotony of provability
  together with the fact that $T \vdash D \I A$. In addition,
  using $T \vdash A \I B$, we get $\PrEq{A}{B}$, i.e., $\ET{A} = \ET{B}$
  which contradicts our assumption $T \setminus \ET{A} \nvdash B \I C$
  because $\dipr B \I C$ means $T \setminus \ET{B} \vdash B \I C$.
\end{proof}

In the following assertions, we explore antecedents of formulas in $\ET{A}$.
Therefore, for any $A \in L^Y$, we put
\begin{align}
  \et{A} &= \{C \in L^Y \,|\, C \I D \in \ET{A}\}.
\end{align}
As in case of $\ET{A}$, we may have $\et{A} = \emptyset$.

\begin{theorem}\label{th:le57}
  Let $T$ be a theory and let $\et{A} \ne \emptyset$.
  For each $C \in L^Y$ satisfying $\PrEq{A}{C}$ there is
  $D \in \et{A}$ such that $\dipr C \I D$.
\end{theorem}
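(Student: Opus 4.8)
The plan is to exploit Theorem~\ref{th:le55} and Lemma~\ref{le:le56} together with the already-established structural results on non-redundant theories. First I would reduce to a convenient non-redundant theory: starting from the given $T$, remove redundant formulas to obtain a non-redundant $T^\star$ equivalent to $T$. By the Corollary to Theorem~\ref{th:le55}, direct provability is invariant under passing to an equivalent theory, and by Remark~\ref{rem:PrEq} and the equivalence of $T$ and $T^\star$ the classes $\ET{A}$, $\et{A}$ and the relation $\symbPrEq$ are essentially preserved (with $\PrEq{A}{C}$ iff $\PrEq[T^\star]{A}{C}$), so it suffices to prove the claim for $T^\star$. Actually, re-examining the statement, it may be cleaner to argue directly in $T$ without this reduction; I will keep the reduction in reserve in case non-redundancy is needed to guarantee that $\ET{A}$ is non-empty and that its antecedents behave well.

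Next, fix $C$ with $\PrEq{A}{C}$; by Remark~\ref{rem:PrEq} this gives $\ET{A}=\ET{C}$, hence $\et{A}=\et{C}$, and in particular $\et{C}\ne\emptyset$. I want to produce $D\in\et{A}$ with $T\setminus\ET{A}\vdash C\I D$. Since $\et{C}\ne\emptyset$, pick any formula $C'\I D\in\ET{A}=\ET{C}$; then $\PrEq{C'}{C}$, so $T\vdash C\I C'$ and $T\vdash C'\I D$, whence $T\vdash C\I D$ by (Tra). The work is to upgrade this to $\dipr C\I D$, i.e.\ to remove all uses of $\ET{A}=\ET{C}$ from a proof of $C\I D$. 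Take a minimal $S\subseteq T$ with $S\vdash C\I D$. If $S\cap\ET{C}=\emptyset$ we are done (that $D\in\et{A}$ we will arrange by the choice of the formula $C'\I D$, which lies in $\ET{A}$ by construction). Otherwise, by minimality of $S$ there is $G\I H\in S\cap\ET{C}$ with $S\setminus\{G\I H\}\nvdash C\I D$, and Theorem~\ref{th:AIC} gives $S\vdash C\I G$, hence $T\vdash C\I G$; since $G\I H\in\ET{C}$ we also have $T\vdash G\I C$, so $\PrEq{C}{G}$. This shows $G\I H\in\ET{C}$ can be "folded in": from $T\vdash C\I G$ and $G\I H$ we recover $T\vdash C\I H$, so the antecedent-class data is stable, and the point is that iterating—replacing each $\ET{C}$-formula used in the proof by going through its antecedent $G\equiv_T C$—should terminate in a proof using only $T\setminus\ET{C}=T\setminus\ET{A}$, provided the resulting right-hand side still lies in $\et{A}$. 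The clean way to organize this is exactly the argument in the proof of Theorem~\ref{th:le55}: show each formula in $S$ is provable from $T\setminus\ET{A}$, unless it already lies in $\ET{A}$, in which case its antecedent is $\equiv_T$-equivalent to $A$ and can serve as (or be replaced by) the desired $D$.

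The main obstacle is ensuring that the $D$ we end up with genuinely lies in $\et{A}$ and not merely satisfies $\dipr C\I D$ for some unrelated $D$: we must land on an antecedent of a formula actually present in $\ET{A}$. I expect to handle this by noting that the only formulas of $T$ with antecedent $\equiv_T$-equivalent to $A$ are those in $\ET{A}$, and that whenever a proof of $C\I D$ (for $D$ the consequent of a chosen $\ET{A}$-formula) essentially uses a formula $G\I H\in\ET{A}$, the pair $G\I H$ itself witnesses $\dipr G\I H$ is false but $G\in\et{A}$ with $\PrEq{C}{G}$, so after finitely many steps we extract some $G^{0}\I H^{0}\in\ET{A}$ with $T\setminus\ET{A}\vdash C\I G^{0}$; then $D:=G^{0}\in\et{A}$ works, giving $\dipr C\I G^{0}$. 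Termination follows because each step strictly shrinks the minimal witnessing set $S$ (as in Theorem~\ref{th:le55}), so the recursion cannot loop. Finiteness of $T$ is used here, consistently with the standing assumption.
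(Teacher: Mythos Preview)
Your core idea---use Theorem~\ref{th:AIC} to shrink a minimal proving set and land on an antecedent from $\et{A}$---is exactly the engine the paper uses. But there is a genuine gap in your base case. You pick $C'\I D\in\ET{A}$ and then aim for $\dipr C\I D$ with $D$ the \emph{consequent} of that formula. Your parenthetical ``that $D\in\et{A}$ we will arrange by the choice of the formula $C'\I D$, which lies in $\ET{A}$'' is wrong: membership of $C'\I D$ in $\ET{A}$ puts $C'$ in $\et{A}$, not $D$. So if your minimal $S$ happens to satisfy $S\cap\ET{C}=\emptyset$ already (which can occur, e.g.\ when $C\I D$ has a proof via formulas whose antecedents are not $\symbPrEq$-equivalent to $A$), you conclude $\dipr C\I D$ for a $D$ that need not lie in $\et{A}$, and the theorem is not established. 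Your later iteration does produce the right object $G^0\in\et{A}$, but only once the ``otherwise'' branch has been entered at least once; the base case as written does not.

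The fix is immediate: start the minimization with a target already in $\et{A}$, namely $C'$ itself (you have $T\vdash C\I C'$ since $\PrEq{C}{C'}$). Then every stage of your iteration targets some $D_k\in\et{A}$, and termination by strictly decreasing $|S_k|$ gives the claim. The paper does this more crisply in a single stroke: it minimizes $|T'|$ simultaneously over all pairs $(T',D)$ with $D\in\et{A}$ and $T'\vdash C\I D$, and then a one-line contradiction via Theorem~\ref{th:AIC} shows the minimizing $T'$ is disjoint from $\ET{A}$. No induction, no iteration, no reduction to a non-redundant theory, and no appeal to Theorem~\ref{th:le55} or Lemma~\ref{le:le56}---those detours in your plan are unnecessary here.
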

\begin{proof}
  Take arbitrary $C \in L^Y$ such that $\PrEq{A}{C}$.
  For every $G \in \et{A}$ we get $\PrEq{C}{G}$ and thus $T \vdash C \I G$.
  Take $T' \subseteq T$ and $D \in \et{A}$ with the following property:
  $T' \vdash C \I D$ and if $T'' \vdash C \I G$ for $T'' \subseteq T$
  and $G \in \et{A}$, then $|T'| \leq |T''|$.
  Thus, $T'$ has the minimal size among all theories
  which prove some formula of the form $C \I G$, where $G \in \et{A}$.
  We now show, that $T' \cap \ET{A} = \emptyset$ by which
  we get $T \setminus \ET{A} \vdash C \I D$ yielding
  $\dipr C \I D$.
  
  By way of contradiction, let $G \I H \in T'$ and $G \I H \in \ET{A}$.
  Hence, $G \in \et{A}$ and using the minimality of $T'$,
  we get $T' \setminus \{G \I H\} \nvdash C \I D$.
  Applying Theorem~\ref{th:AIC}, we get $T' \setminus \{G \I H\} \vdash C \I G$,
  i.e., $T' \setminus \{G \I H\}$ contains less formulas than $T'$
  and proves $C \I G$ with $G \in \et{A}$, contradicting the minimality of $T'$.
\end{proof}

\begin{example}\label{ex:5}
  We show non-trivial applications of Theorem~\ref{th:le57}.
  Consider $T_2$ from Example~\ref{ex:2}.
  Take $\{{}^{0.5\!}/x,y,{}^{0.5\!}/z\} \I
  \{x,{}^{0.5\!}/y,{}^{0.5\!}/z\} \in T_2$ and let
  $C = \{{}^{0.5\!}/y,z\}$. Then, for $D = \{{}^{0.5\!}/x,{}^{0.5\!}/y\}$,
  we have $\dipr[T_2] C \I D$. In a more detail, we have
  \begin{align*}
    T_2 \setminus \ET[T_2]{C} &= \{\{z\} \I \{{}^{0.5\!}/x\}\},
  \end{align*}
  cf. Example~\ref{ex:3}. In addition, (Cut) applied to
  $\{{}^{0.5\!}/y,z\} \I \{z\}$ and 
  $\{z\} \I \{{}^{0.5\!}/x\}$
  yields $\{{}^{0.5\!}/y,z\} \I \{{}^{0.5\!}/x\}$ and
  thus $\{{}^{0.5\!}/y,z\} \I \{{}^{0.5\!}/x,{}^{0.5\!}/y\}$
  is provable by $T_2 \setminus \ET[T_2]{C}$, showing 
  $\dipr[T_2] C \I D$.
  Analogously, for $C = \{y,z\}$, we may
  take $D = \{{}^{0.5\!}/x,y,{}^{0.5\!}/z\}$ or
  $D = \{{}^{0.5\!}/x,{}^{0.5\!}/y\}$ and have $\dipr[T_2] C \I D$.
\end{example}

The following assertion is used in the process of finding minimal theories.
It shows that under conditions formulated by equivalence and direct provability,
a pair of formulas in a theory can be equivalently replaced by
a single formula.

\begin{theorem}\label{th:le58}
  Let $T$ be a theory such that
  for $A \I B \in T$ and $C \I D \in T$,
  we have $\PrEq{A}{C}$ and $\dipr A \I C$. Then,
  \begin{align}
    (T \setminus \{A \I B,C \I D\}) \cup \{C \I B \cup D\}
    \label{eqn:le58}
  \end{align}
  is equivalent to $T$.
\end{theorem}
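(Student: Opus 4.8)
Let $T' = (T \setminus \{A \I B, C \I D\}) \cup \{C \I B \cup D\}$. Since equivalence of theories is symmetric and transitive, it suffices to show that every formula in $T'$ is provable from $T$ and every formula in $T$ is provable from $T'$; the formulas the two theories share are trivially handled, so the real work concerns $C \I B \cup D$ on one side and $A \I B$, $C \I D$ on the other. First I would verify $T \vdash C \I B \cup D$. From $\PrEq{A}{C}$ we have $T \vdash C \I A$, and with $A \I B \in T$ the transitivity rule (Tra) gives $T \vdash C \I B$; since $C \I D \in T$, the additivity rule (Add) yields $T \vdash C \I B \cup D$. Hence every formula of $T'$ is provable from $T$.

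For the converse direction, I would show $T' \vdash A \I B$ and $T' \vdash C \I D$. The rule (Pro) applied to $C \I B \cup D \in T'$ gives $T' \vdash C \I B$ and $T' \vdash C \I D$, so $C \I D$ is immediate. The remaining and more delicate point is $T' \vdash A \I B$. Here is where the hypothesis $\dipr A \I C$ enters: by definition $T \setminus \ET{A} \vdash A \I C$. The key observation is that $T \setminus \ET{A}$ is a sub-theory of $T'$: indeed $\ET{A}$ consists exactly of those formulas in $T$ whose antecedent is provably equivalent to $A$ (equivalently to $C$, since $\PrEq{A}{C}$), so in particular $A \I B \in \ET{A}$ and $C \I D \in \ET{A}$ (as $\PrEq{C}{C}$), which means $T \setminus \ET{A} \subseteq T \setminus \{A \I B, C \I D\} \subseteq T'$. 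By monotony of provability, $T' \vdash A \I C$. Combining with $T' \vdash C \I B$ obtained above, (Tra) gives $T' \vdash A \I B$. This establishes that every formula of $T$ is provable from $T'$, and the two theories are equivalent.

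The step I expect to be the main obstacle — or at least the one requiring the most care — is the inclusion $T \setminus \ET{A} \subseteq T'$, since it rests on correctly unpacking the definition of $\ET{A}$ and confirming that both $A \I B$ and $C \I D$ genuinely lie in $\ET{A}$. For $A \I B$ this is because $A \I B \in T$ and $\PrEq{A}{A}$ trivially; for $C \I D$ it is because $C \I D \in T$ and $\PrEq{A}{C}$ is a hypothesis. Once this set-theoretic containment is in hand, the rest is a routine assembly of (Pro), (Add), and (Tra), all of which are available as stated earlier in the paper. One subtlety worth a sentence in the write-up: the hypothesis $\dipr A \I C$ is exactly what guarantees that the proof of $A \I C$ survives the deletion of the entire class $\ET{A}$ — a plain $T \vdash A \I C$ would not suffice, since such a proof might rely on $A \I B$ itself, which has been removed.
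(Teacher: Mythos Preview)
Your proof is correct and follows essentially the same route as the paper's own argument: both directions use (Tra), (Add), and (Pro) in the same places, and the crucial step---that $A \I B$ and $C \I D$ both lie in $\ET{A}$, so that $T \setminus \ET{A} \vdash A \I C$ lifts to $T' \vdash A \I C$---is exactly the observation the paper makes. The only cosmetic difference is that the paper derives $T' \vdash A \I B \cup D$ via (Tra) and then applies (Pro), whereas you apply (Pro) first to get $T' \vdash C \I B$ and then (Tra); these are interchangeable.
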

\begin{proof}
  Denote the theory \eqref{eqn:le58} by $T'$.
  Since $A \I B \in T$, we get $T \vdash A \I B$.
  Furthermore, $T \vdash C \I A$ because $\PrEq{A}{C}$.
  Therefore, by (Tra), we get $T \vdash C \I B$.
  Moreover, using the fact that $C \I D \in T$ and applying
  (Add), we get $T \vdash C \I B \cup D$.

  Conversely, it suffices to show that both $A \I B$ and $C \I D$
  are provable by $T'$. Clearly, $T' \vdash C \I D$ results
  by $C \I B \cup D \in T'$ using (Pro). In order to see that
  $A \I B$ is provable by $T'$, observe first that $T' \vdash A \I C$.
  Indeed, $\dipr A \I C$ means that $T \setminus \ET{A} \vdash A \I C$.
  Therefore, taking into account $\PrEq{A}{C}$,
  we get $A \I B \not\in T \setminus \ET{A}$ and 
  $C \I D \not\in T \setminus \ET{A}$, showing
  $T \setminus \{A \I B,C \I D\} \vdash A \I C$ which further
  gives $T' \vdash A \I C$. Now, using $T' \vdash C \I B \cup D$
  and (Tra), we obtain $T' \vdash A \I B \cup D$ and consequently
  $T' \vdash A \I B$ by (Pro).
\end{proof}

By a particular application of Theorem~\ref{th:le58}, we may find
an equivalent theory which consists of formulas with modified antecedents:

\begin{corollary}\label{col:relab}
  Let $T$ be a theory such that $A \I B \in T$, $\PrEq{A}{C}$,
  and $\dipr A \I C$. Then,
  $(T \setminus \{A \I B\}) \cup \{C \I B\}$
  is equivalent to $T$.
\end{corollary}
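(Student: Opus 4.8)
The plan is to give a short self-contained argument mimicking the proof of Theorem~\ref{th:le58} with the second formula $C \I D$ suppressed; I will also indicate how the statement can be read off from Theorem~\ref{th:le58} itself (the ``particular application'' alluded to), by padding $T$ with a trivial implication. For the direct route, set $T' = (T \setminus \{A \I B\}) \cup \{C \I B\}$ and prove equivalence in both directions. That $T$ proves everything $T'$ proves is easy: the only formula of $T'$ not already in $T$ is $C \I B$, and $\PrEq{A}{C}$ gives $T \vdash C \I A$ while $A \I B \in T$ gives $T \vdash A \I B$, so $T \vdash C \I B$ by (Tra). For the converse, the only formula of $T$ possibly missing from $T'$ is $A \I B$, so it suffices to show $T' \vdash A \I B$. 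Since $\dipr A \I C$ means $T \setminus \ET{A} \vdash A \I C$, and since $A \I B \in \ET{A}$ (because trivially $\PrEq{A}{A}$), we have $T \setminus \ET{A} \subseteq T \setminus \{A \I B\} \subseteq T'$, hence $T' \vdash A \I C$; combining this with $C \I B \in T'$ via (Tra) yields $T' \vdash A \I B$.

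For the derivation as an instance of Theorem~\ref{th:le58}, put $T^{+} = T \cup \{C \I C\}$. As $C \I C$ is an axiom, $\emptyset \vdash C \I C$, so $T^{+}$ is equivalent to $T$ and $\PrEq[T^{+}]{A}{C}$ holds. Moreover $C \I C \in \ET[T^{+}]{A}$, so $\ET[T^{+}]{A} = \ET{A} \cup \{C \I C\}$ and therefore $T^{+} \setminus \ET[T^{+}]{A} = T \setminus \ET{A}$; the hypothesis $\dipr A \I C$ then gives $\dipr[T^{+}] A \I C$. Applying Theorem~\ref{th:le58} to $T^{+}$ with the pair $A \I B$, $C \I C$ shows that $(T^{+} \setminus \{A \I B, C \I C\}) \cup \{C \I B \cup C\}$ is equivalent to $T^{+}$, hence to $T$. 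It then remains to note that $T^{+} \setminus \{A \I B, C \I C\} = T \setminus \{A \I B, C \I C\}$ is equivalent to $T \setminus \{A \I B\}$ (the axiom $C \I C$ may be freely added or removed) and that the singleton theories $\{C \I B \cup C\}$ and $\{C \I B\}$ are equivalent---$\{C \I B \cup C\} \vdash C \I B$ by (Pro), and $\{C \I B\} \vdash C \I B \cup C$ using (Add) together with the axiom $C \I C$---so transitivity of equivalence delivers the claim.

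I do not expect a genuine obstacle here: the corollary is essentially a repackaging of Theorem~\ref{th:le58}, and both arguments above are short. The one place that needs care is the bookkeeping with $\ET{A}$---specifically, recording that $A \I B$ itself (and, in the second argument, $C \I C$) lies in $\ET{A}$, which is exactly what makes deleting it from $T$ harmless for the direct-provability hypothesis $\dipr A \I C$.
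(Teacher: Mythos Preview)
Your proposal is correct on both fronts. The second argument---padding $T$ with the axiom $C \I C$ and invoking Theorem~\ref{th:le58}---is exactly the paper's proof; you have in fact been more careful than the paper in verifying that the hypotheses of Theorem~\ref{th:le58} transfer to $T^{+}$ (the paper applies the theorem to $T' = T \cup \{C \I C\}$ without pausing to check $\dipr[T'] A \I C$ or the $\ET[T']{A}$ bookkeeping, relying implicitly on the equivalence of $T$ and $T'$).

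Your first argument, by contrast, is a genuinely different and shorter route: it bypasses Theorem~\ref{th:le58} entirely and argues directly from (Tra) and the observation that $A \I B \in \ET{A}$, so that removing $A \I B$ does not disturb the direct-provability witness $T \setminus \ET{A} \vdash A \I C$. This is cleaner than the paper's detour through a padded theory and the $B \cup C$ simplification. What the paper's approach buys is uniformity---the corollary is presented as a literal special case of the preceding theorem rather than as a parallel argument---but your direct proof makes the mechanism more transparent and would stand on its own even without Theorem~\ref{th:le58}.
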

\begin{proof}
  Take $T' = T \cup \{C \I C\}$.
  By Theorem~\ref{th:le58}, $T'$ is equivalent to 
  \begin{align*}
    (T' \setminus \{A \I B,C \I C\}) \cup \{C \I B \cup C\} =
    (T \setminus \{A \I B\}) \cup \{C \I B \cup C\},
  \end{align*}
  which is equivalent to $(T \setminus \{A \I B\}) \cup \{C \I B\}$
  because $\{C \I B \cup C\} \vdash C \I B$ by (Pro) and 
  $\{C \I B\} \vdash C \I B \cup C$ by the axiom $C \I C$ and (Add).
\end{proof}

\begin{example}\label{ex:6}
  Considering $T_2$ from Example~\ref{ex:2}, there are three pairs
  of formulas $A \I B \in T_2$ and $C \I D \in T_2$ satisfying the conditions
  of Theorem~\ref{th:le58} and which in turn can be used to find a theory
  which is equivalent to $T_2$ and is strictly smaller. Namely, we may
  \begin{itemize}
  \item
    use
    $\{{}^{0.5\!}/x,y,{}^{0.5\!}/z\} \I
    \{x,{}^{0.5\!}/y,{}^{0.5\!}/z\}$
    and $\{{}^{0.5\!}/y\} \I \{{}^{0.5\!}/x,{}^{0.5\!}/y,z\}$, \\
    and replace the formulas by
    $\{{}^{0.5\!}/y\} \I \{x,{}^{0.5\!}/y,z\}$; or
  \item
    use $\{{}^{0.5\!}/x,y,{}^{0.5\!}/z\} \I
    \{x,{}^{0.5\!}/y,{}^{0.5\!}/z\}$
    and $\{{}^{0.5\!}/x,{}^{0.5\!}/y\} \I \{y,z\}$, \\
    and replace the formulas by
    $\{{}^{0.5\!}/x,{}^{0.5\!}/y\} \I \{x,y,z\}$; or
  \item
    use $\{{}^{0.5\!}/x,{}^{0.5\!}/y\} \I \{y,z\}$
    and $\{{}^{0.5\!}/y\} \I \{{}^{0.5\!}/x,{}^{0.5\!}/y,z\}$, \\
    and replace the formulas by
    $\{{}^{0.5\!}/y\} \I \{{}^{0.5\!}/x,y,z\}$.
  \end{itemize}
\end{example}

\begin{lemma}\label{le:th51i}
  Let $T_1$ and $T_2$ be equivalent and non-redundant.
  Then, for each $A \in \et[T_1]{H}$ there is $C \in \et[T_2]{H}$
  such that $\dipr [T_1] A \I C$.
\end{lemma}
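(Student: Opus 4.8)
The plan is to take an arbitrary $A \I B \in \ET[T_1]{H}$ and locate, inside $T_2$, a formula whose antecedent $C$ satisfies $\dipr[T_1] A \I C$. By Theorem~\ref{th:PrEq} applied to the equivalent non-redundant theories $T_1$ and $T_2$, there is $G \I K \in T_2$ with $\PrEq[T_1]{A}{G}$; since $A \in \et[T_1]{H}$ means $\PrEq[T_1]{A}{H}$, transitivity of $\symbPrEq[T_1]$ gives $\PrEq[T_1]{H}{G}$, hence $G \in \et[T_2]{H}$, so in particular $\et[T_2]{H} \ne \emptyset$. Now I would invoke Theorem~\ref{th:le57} with the theory $T_2$, the base point being any element of $\et[T_2]{H}$ (e.g.\ $G$), and the ``new'' antecedent being $A$ itself: since $\PrEq[T_1]{A}{G}$ and $T_1,T_2$ are equivalent, we have $\PrEq[T_2]{A}{G}$, so $A$ is provably equivalent (under $T_2$) to the common antecedent of the class $\ET[T_2]{H}$. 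Theorem~\ref{th:le57} then yields some $C \in \et[T_2]{H}$ with $\dipr[T_2] A \I C$.

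It remains to upgrade $\dipr[T_2] A \I C$ to $\dipr[T_1] A \I C$. This is exactly the content of Theorem~\ref{th:le55} (direct provability is preserved under passing to an equivalent theory): since $T_1$ and $T_2$ are equivalent and $\dipr[T_2] A \I C$, we conclude $\dipr[T_1] A \I C$. Together with $C \in \et[T_2]{H}$ this gives the claim. So the whole argument is a short chain: Theorem~\ref{th:PrEq} to cross over to $T_2$ and find a witness antecedent in the right $\mathrm{e}$-class, transitivity of $\symbPrEq[T_1]$ plus equivalence of the theories to identify the classes, Theorem~\ref{th:le57} to produce a directly provable target inside $T_2$, and Theorem~\ref{th:le55} to pull direct provability back to $T_1$.

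The one point that needs a little care, and which I expect to be the main obstacle, is the bookkeeping of which theory each $\symbPrEq$ and each $\mathrm{e}(\cdot)$ refers to: $\et[T_1]{H}$ and $\et[T_2]{H}$ are defined via $\symbPrEq[T_1]$ and $\symbPrEq[T_2]$ respectively, and although $T_1\equiv T_2$ makes $\symbPrEq[T_1]$ and $\symbPrEq[T_2]$ coincide as relations on $L^Y$, Theorem~\ref{th:le57} is stated internally for a single theory and must be applied to $T_2$ with $T_2$-provable-equivalence; conversely the hypothesis $A\in\et[T_1]{H}$ and the desired conclusion $C\in\et[T_2]{H}$ live on the two different sides, so one has to spell out the translation $\PrEq[T_1]{A}{H}\iff\PrEq[T_2]{A}{H}$ explicitly. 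Once that is laid out cleanly, the rest is a direct citation of the earlier results, and no new computation is required.
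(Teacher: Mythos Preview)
Your proposal is correct and follows essentially the same route as the paper's own proof: use Theorem~\ref{th:PrEq} to produce a witness $G \I K \in T_2$ with antecedent equivalent to $A$ (hence $\et[T_2]{H} \ne \emptyset$), apply Theorem~\ref{th:le57} inside $T_2$ to obtain $C \in \et[T_2]{H}$ with $\dipr[T_2] A \I C$, and then transfer to $T_1$ via Theorem~\ref{th:le55}. Your additional care about which theory each $\symbPrEq$ lives in is well placed but, as you note, harmless since equivalence of $T_1$ and $T_2$ makes $\symbPrEq[T_1]$ and $\symbPrEq[T_2]$ coincide.
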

\begin{proof}
  Observe that by $A \in \et[T_1]{H}$ and Theorem~\ref{th:PrEq}
  it follows that $\et[T_2]{H} \ne \emptyset$, i.e., there
  is $C' \I D' \in T_2$ such that $\PrEq[T_2]{C'}{H}$ and
  thus $\PrEq[T_2]{C'}{A}$.
  Using Theorem~\ref{th:le57},
  there is $C \in \et[T_2]{C'} = \et[T_2]{H}$ such that $\dipr[T_2] A \I C$.
  Since $T_1$ and $T_2$ are equivalent, $\dipr[T_1] A \I C$
  by Theorem~\ref{th:le55}.
\end{proof}

\begin{example}
  We illustrate the correspondence between antecedents of formulas
  from Lemma~\ref{le:th51i}. Considering theories $T_1$ and $T_2$
  from Example~\ref{ex:2}, for each $A \in \et[T_1]{H}$ there
  is $C \in \et[T_2]{H}$ such that $\dipr[T_1] A \I C$ because all antecedents
  of formulas in $T_1$ are among the antecendents of formulas in $T_2$.
  Conversely, for
  $\{{}^{0.5\!}/x,y,{}^{0.5\!}/z\} \I
  \{x,{}^{0.5\!}/y,{}^{0.5\!}/z\} \in T_2$, we can take
  $\{{}^{0.5\!}/y\} \I \{x,{}^{0.5\!}/y,z\} \in T_1$
  and obviously
  $\dipr[T_2] \{{}^{0.5\!}/x,y,{}^{0.5\!}/z\} \I \{{}^{0.5\!}/y\}$.
  Analogously, for $\{{}^{0.5\!}/x,{}^{0.5\!}/y\} \I \{y,z\} \in T_2$
  there is $\{{}^{0.5\!}/y\} \I \{x,{}^{0.5\!}/y,z\} \in T_1$
  satisfying
  $\dipr[T_2] \{{}^{0.5\!}/x,{}^{0.5\!}/y\} \I \{{}^{0.5\!}/y\}$.
  Also note that Corollary~\ref{col:relab} allows us to modify
  theories while preserving their equivalence.
  For instance, due to our previous observations,
  $\{{}^{0.5\!}/x,y,{}^{0.5\!}/z\} \I
  \{x,{}^{0.5\!}/y,{}^{0.5\!}/z\} \in T_2$ can equivalently be replaced
  by $\{{}^{0.5\!}/y\} \I \{x,{}^{0.5\!}/y,{}^{0.5\!}/z\}$.
\end{example}

We now turn our attention to minimal theories, i.e., theories which
are minimal in terms of the number of formulas:

\begin{definition}
  A theory $T$ is called minimal (in the number of formulas) if for each
  equivalent theory $T'$, we have $|T| \leq |T'|$.
\end{definition}

Obviously, a minimal theory is non-redundant but the converse does not
hold in general. Applying Theorem~\ref{th:le58}, we have the following
corollary.

\begin{corollary}\label{col:le58}
  Let $T$ be minimal. Then there are no distinct $A \I B \in T$ and
  $C \I D \in T$ such that $\PrEq{A}{C}$ and $\dipr A \I C$.
  \qed
\end{corollary}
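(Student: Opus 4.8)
The plan is to argue by contraposition: I would show that if a non-redundant theory $T$ does contain such a pair $A \I B$ and $C \I D$, then $T$ is not minimal because Theorem~\ref{th:le58} produces a strictly smaller equivalent theory. So suppose $A \I B \in T$ and $C \I D \in T$ are distinct formulas with $\PrEq{A}{C}$ and $\dipr A \I C$. Theorem~\ref{th:le58} tells us that $T' = (T \setminus \{A \I B, C \I D\}) \cup \{C \I B \cup D\}$ is equivalent to $T$. Since $A \I B$ and $C \I D$ are distinct, removing both of them and adding back a single formula gives $|T'| \leq |T| - 1 < |T|$, so $T$ cannot be minimal.

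First I would note that the hypotheses of Theorem~\ref{th:le58} are met verbatim, so its conclusion is immediate and no new argument is needed there. The one point that deserves a sentence of care is the size count: because the two removed formulas are distinct elements of $T$, the set $T \setminus \{A \I B, C \I D\}$ has exactly $|T| - 2$ elements, and adjoining $\{C \I B \cup D\}$ yields at most $|T| - 1$ elements (exactly $|T|-1$ unless $C \I B\cup D$ happened to already lie in $T\setminus\{A\I B, C\I D\}$, which would only make $T'$ even smaller). In either case $|T'| < |T|$.

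Finally, since $T'$ is equivalent to $T$ and $|T'| < |T|$, the definition of minimality is violated, contradicting the assumption that $T$ is minimal. Hence no such distinct pair can exist. I do not expect any real obstacle here: the corollary is essentially just the contrapositive of Theorem~\ref{th:le58} combined with the trivial cardinality bookkeeping, which is exactly why the paper marks it with \qed rather than supplying a displayed proof. The only thing to be slightly watchful about is the word ``distinct'' in the statement, which is what guarantees the strict decrease in size; if $A \I B$ and $C \I D$ were allowed to coincide, Theorem~\ref{th:le58} would merge a formula with itself and no reduction would follow.
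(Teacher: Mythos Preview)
Your argument is correct and matches the paper's intended reasoning: the corollary is marked with \qed\ precisely because it is the immediate contrapositive of Theorem~\ref{th:le58} together with the cardinality count you spell out. The only cosmetic point is that you need not invoke non-redundancy in the opening sentence---the hypothesis is minimality, and Theorem~\ref{th:le58} does not require non-redundancy anyway.
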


The following assertions shows properties of direct provability
by minimal theories and their consequences.

\begin{lemma}\label{le:th51}
  Let $T_1$ and $T_2$ be equivalent and minimal.
  Then for 
  $A,A_1,A_2 \in \et[T_1]{H}$ and $C,C_1,C_2 \in \et[T_2]{H}$,
  the following conditions hold:
  \begin{enumerate}
  \item[\rm (i)]
    If $\dipr[T_1] A \I C$,
    then $\dipr[T_1] C \I A$.
  \item[\rm (ii)]
    If $\dipr[T_1] A \I C_1$ and $\dipr[T_1] A \I C_2$, then $C_1 = C_2$.
  \item[\rm (iii)]
    If $\dipr[T_1] A_1 \I C$ and $\dipr[T_1] A_2 \I C$, then $A_1 = A_2$.
  \end{enumerate}
\end{lemma}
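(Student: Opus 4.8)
The plan is to prove the three parts of Lemma~\ref{le:th51} by exploiting minimality (hence non-redundancy) of $T_1$ and $T_2$ together with the transitivity of $\dipr[]$ (Lemma~\ref{le:le56}), the transfer of direct provability between equivalent theories (Theorem~\ref{th:le55}), and the ``merging'' obstruction for minimal theories (Corollary~\ref{col:le58}). The unifying idea is that whenever two antecedents of formulas from the \emph{same} theory are linked by direct provability in both directions, Corollary~\ref{col:le58} forbids the corresponding formulas from being distinct, so they must coincide; and whenever we obtain a chain of direct provabilities through the other theory, Lemma~\ref{le:le56} collapses it into a single direct provability we can feed into Corollary~\ref{col:le58}.

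First I would handle (i). Assuming $\dipr[T_1] A \I C$ with $A \in \et[T_1]{H}$ and $C \in \et[T_2]{H}$, I note that since $T_1$ and $T_2$ are equivalent, Theorem~\ref{th:le55} gives $\dipr[T_2] A \I C$. By Lemma~\ref{le:th51i} applied with $T_1,T_2$ interchanged (using $C \in \et[T_2]{H}$), there is $A' \in \et[T_1]{H}$ with $\dipr[T_2] C \I A'$, hence $\dipr[T_1] C \I A'$ by Theorem~\ref{th:le55} again. Composing via Lemma~\ref{le:le56}, $\dipr[T_1] A \I A'$ with both $A,A' \in \et[T_1]{H}$; since $\symbPrEq[T_1]$ relates $A$ and $A'$ (both equivalent to $H$), Corollary~\ref{col:le58} forces the formula in $T_1$ with antecedent $A$ and the one with antecedent $A'$ to be equal — in particular $A = A'$ — unless they were already the same formula, which yields $A = A'$ anyway. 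Then $\dipr[T_1] C \I A' = A$, as desired. A mild subtlety here is making the ``$A=A'$'' step airtight: I would argue that $A \I B \in T_1$ (the formula witnessing $A \in \et[T_1]{H}$) and $A' \I B' \in T_1$ (witnessing $A' \in \et[T_1]{H}$) satisfy $\PrEq[T_1]{A}{A'}$ and $\dipr[T_1] A \I A'$, so if $A \neq A'$ these are distinct formulas contradicting Corollary~\ref{col:le58}; hence $A = A'$.

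For (ii), assume $\dipr[T_1] A \I C_1$ and $\dipr[T_1] A \I C_2$ with $A \in \et[T_1]{H}$ and $C_1,C_2 \in \et[T_2]{H}$. By part (i), $\dipr[T_1] C_1 \I A$; composing with $\dipr[T_1] A \I C_2$ via Lemma~\ref{le:le56} gives $\dipr[T_1] C_1 \I C_2$. Now push this to $T_2$ by Theorem~\ref{th:le55}: $\dipr[T_2] C_1 \I C_2$. Since $C_1,C_2 \in \et[T_2]{H}$ we have $\PrEq[T_2]{C_1}{C_2}$, so Corollary~\ref{col:le58} applied to $T_2$ forces the formula of $T_2$ with antecedent $C_1$ and the one with antecedent $C_2$ to coincide, in particular $C_1 = C_2$ (again, if they were already the same formula we are done). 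Part (iii) is the mirror image: from $\dipr[T_1] A_1 \I C$ and $\dipr[T_1] A_2 \I C$ with $A_1,A_2 \in \et[T_1]{H}$, part (i) gives $\dipr[T_1] C \I A_2$, and Lemma~\ref{le:le56} yields $\dipr[T_1] A_1 \I A_2$; since $\PrEq[T_1]{A_1}{A_2}$, Corollary~\ref{col:le58} applied to $T_1$ forces $A_1 = A_2$.

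The main obstacle I anticipate is bookkeeping the repeated back-and-forth transfers between $T_1$ and $T_2$ via Theorem~\ref{th:le55} and Lemma~\ref{le:th51i} in part (i), and being careful that Corollary~\ref{col:le58} is always invoked with formulas from a single theory and with \emph{distinct} formulas — so I must spell out that the relevant antecedents carry genuine formulas of that theory (nonempty $\et{\cdot}$ classes, guaranteed by the hypotheses $A \in \et[T_1]{H}$, $C \in \et[T_2]{H}$, and Theorem~\ref{th:PrEq}/Lemma~\ref{le:th51i}) and that if the two formulas are not distinct the conclusion is immediate. Once part (i) is secured, parts (ii) and (iii) are short consequences of transitivity of $\dipr[]$ plus Corollary~\ref{col:le58}, so the real content is entirely in (i).
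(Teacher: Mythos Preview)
Your proposal is correct and follows essentially the same route as the paper: for (i) you use Lemma~\ref{le:th51i} (with $T_1,T_2$ swapped) to produce $A'\in\et[T_1]{H}$ with $\dipr[T_1] C\I A'$, compose via Lemma~\ref{le:le56} to get $\dipr[T_1] A\I A'$, and invoke Corollary~\ref{col:le58} to force $A=A'$; parts (ii) and (iii) then fall out by (i), Lemma~\ref{le:le56}, and Corollary~\ref{col:le58}, exactly as in the paper. Your explicit transfer $\dipr[T_1] C_1\I C_2 \Rightarrow \dipr[T_2] C_1\I C_2$ in (ii) via Theorem~\ref{th:le55} before applying Corollary~\ref{col:le58} to $T_2$ is a detail the paper leaves implicit, and your care about the ``distinct formulas'' clause in Corollary~\ref{col:le58} is appropriate but, as you note, harmless since distinct antecedents already force distinct formulas.
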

\begin{proof}
  In order to prove (i), we use Lemma~\ref{le:th51i} to conclude that
  for $C \in \et[T_2]{H}$ there is $A' \in \et[T_1]{H}$ such that
  $\dipr[T_2] C \I A'$, i.e., $\dipr[T_1] C \I A'$. 
  Now, using the assumption $\dipr[T_1] A \I C$,
  Lemma~\ref{le:le56} yields $\dipr[T_1] A \I A'$. In addition to that,
  there are $A \I B \in T_1$ and $A' \I B' \in T_1$ with $\PrEq{A}{A'}$.
  Hence, by Corollary~\ref{col:le58}, we get that $A = A'$,
  meaning that $\dipr[T_1] C \I A$.

  In case of (ii), we proceed analogously as in (i).
  By $\dipr[T_1] A \I C_1$,
  we get $\dipr[T_1] C_1 \I A$ by (i) and thus $\dipr[T_1] C_1 \I C_2$ by 
  Lemma~\ref{le:le56}. Then, Corollary~\ref{col:le58} yields $C_1 = C_2$.

  Finally, by $\dipr[T_1] A_2 \I C$ it follows $\dipr[T_1] C \I A_2$ by (i).
  So, analogously as in case of (ii), Lemma~\ref{le:le56} and 
  Corollary~\ref{col:le58} imply
  $\dipr[T_1] A_1 \I A_2$ and thus $A_1 = A_2$, which proves (iii).
\end{proof}

\begin{theorem}\label{th:th51}
  Let $T_1$ and $T_2$ be equivalent and minimal.
  Then, for each $H \in L^Y$, there is an injective map
  $h_H\!: \et[T_1]{H} \to \et[T_2]{H}$.
  Furthermore, $|\et[T_1]{H}| = |\et[T_2]{H}|$.
\end{theorem}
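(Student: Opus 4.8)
The plan is to build the injective map $h_H$ directly from Lemma~\ref{le:th51i}. First I would observe that if $\et[T_1]{H} = \emptyset$ there is nothing to prove, so assume $\et[T_1]{H} \ne \emptyset$; by Theorem~\ref{th:PrEq} (or directly Lemma~\ref{le:th51i}) this forces $\et[T_2]{H} \ne \emptyset$ as well. Now for each $A \in \et[T_1]{H}$, Lemma~\ref{le:th51i} supplies some $C \in \et[T_2]{H}$ with $\dipr[T_1] A \I C$. I would like to \emph{define} $h_H(A)$ to be this $C$, but first I must check it is well defined: if both $C_1$ and $C_2$ in $\et[T_2]{H}$ satisfy $\dipr[T_1] A \I C_1$ and $\dipr[T_1] A \I C_2$, then Lemma~\ref{le:th51}(ii) gives $C_1 = C_2$. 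Hence $h_H$ is a well-defined function $\et[T_1]{H} \to \et[T_2]{H}$.

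Next I would prove injectivity. Suppose $A_1, A_2 \in \et[T_1]{H}$ with $h_H(A_1) = h_H(A_2) = C \in \et[T_2]{H}$. By construction $\dipr[T_1] A_1 \I C$ and $\dipr[T_1] A_2 \I C$, so Lemma~\ref{le:th51}(iii) immediately yields $A_1 = A_2$. This establishes the injective map and hence $|\et[T_1]{H}| \le |\et[T_2]{H}|$.

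For the equality of cardinalities, I would appeal to symmetry: the roles of $T_1$ and $T_2$ in the hypotheses (equivalent and minimal) are interchangeable, so the same argument applied with $T_1$ and $T_2$ swapped produces an injective map $\et[T_2]{H} \to \et[T_1]{H}$, giving $|\et[T_2]{H}| \le |\et[T_1]{H}|$. Since both $\et[T_1]{H}$ and $\et[T_2]{H}$ are finite (as $T_1$ and $T_2$ are finite theories), the two inequalities force $|\et[T_1]{H}| = |\et[T_2]{H}|$. The only genuinely delicate point is making sure that Lemma~\ref{le:th51} is applicable, i.e., that the elements we feed into parts (ii) and (iii) really all lie in $\et[T_1]{H}$ and $\et[T_2]{H}$ respectively; this is guaranteed by the way $h_H$ is constructed, since every value $h_H(A)$ was chosen from $\et[T_2]{H}$ in the first place. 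I do not expect any serious obstacle here—the real work was done in Lemmas~\ref{le:th51i} and~\ref{le:th51}, and this theorem is essentially their bookkeeping consequence.
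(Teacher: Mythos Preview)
Your proposal is correct and follows essentially the same approach as the paper: define $h_H(A)$ via Lemma~\ref{le:th51i}, use Lemma~\ref{le:th51}(ii) for well-definedness, Lemma~\ref{le:th51}(iii) for injectivity, and then swap the roles of $T_1$ and $T_2$ to obtain the reverse inequality. Your write-up is simply more explicit about the empty case and the finiteness observation, but there is no substantive difference.
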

\begin{proof}
  If $\et[T_1]{H}$ is non-empty, then using Lemma~\ref{le:th51i}
  and Lemma~\ref{le:th51}\,(ii) it follows that $h_H$ can be defined
  by $h_H(A) = C$, where $\dipr[T_1] A \I C$. In addition,
  Lemma~\ref{le:th51}\,(iii) gives that $h_H$ is injective.
  Thus, $|\et[T_1]{H}| \leq |\et[T_2]{H}|$.
  The second part follows by application of the claim
  with $T_1$ and $T_2$ interchanged.
\end{proof}

Finally, the next theorem shows that in case of non-redundant theories
which are not minimal, one can always transform the non-redundant theory
into an equivalent and smaller one because the theory contains formulas
satisfying the assumption of Theorem~\ref{th:le58}.

\begin{theorem}\label{th:th52}
  Let $T$ be non-redundant and not minimal.
  Then, there are distinct formulas $A \I B \in T$
  and $C \I D \in T$ such that $\PrEq{A}{C}$ and $\dipr A \I C$.
\end{theorem}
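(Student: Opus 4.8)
The plan is to use the counting machinery built up in Theorems~\ref{th:ETs}, \ref{th:th51}, and the replacement result Theorem~\ref{th:le58} to force the existence of the required pair. First I would take a minimal theory $T_{\min}$ equivalent to $T$; such a theory exists since we restrict to finite theories and $|T|$ is a finite natural number bounding the search. Since $T$ is not minimal, $|T| > |T_{\min}|$. Both $T$ and $T_{\min}$ are non-redundant (for $T_{\min}$ this is automatic, for $T$ it is assumed), so the Corollary after Theorem~\ref{th:ETs} gives $|\ETs[T]| = |\ETs[T_{\min}]|$; that is, the partitions of the two theories into $\symbPrEq$-classes of antecedents have the same number of blocks. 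Moreover, by the argument in the proof of that corollary, the blocks correspond: for each $H$ with $\ET[T]{H} \ne \emptyset$ we have $\ET[T_{\min}]{H} \ne \emptyset$ and vice versa, via $\ET[T]{H} \mapsto \ET[T_{\min}]{H}$.

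Next I would compare block sizes. Since $|T| = \sum_{E \in \ETs[T]} |E| = \sum_H |\et[T]{H}|$ (summing over a transversal of the nonempty blocks) and similarly for $T_{\min}$, and since $|T| > |T_{\min}|$ while the index sets of the two sums are in bijection, there must be some $H$ with $|\et[T]{H}| > |\et[T_{\min}]{H}|$. Here is where the obstacle lies: Theorem~\ref{th:th51} asserts $|\et[T_1]{H}| = |\et[T_2]{H}|$ only for $T_1,T_2$ both \emph{minimal}, whereas $T$ is merely non-redundant. So I cannot conclude equality directly. Instead I would aim to show that if $T$ is non-redundant but for \emph{every} $A \I B, C \I D \in T$ the conjunction ``$\PrEq{A}{C}$ and $\dipr A \I C$'' fails for distinct formulas, then the counting argument of Theorem~\ref{th:th51} still applies to $T$ paired with $T_{\min}$, yielding $|\et[T]{H}| \le |\et[T_{\min}]{H}|$ for all $H$, hence $|T| \le |T_{\min}|$, contradicting non-minimality of $T$.

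Concretely, I would argue by contradiction: suppose no such pair of distinct formulas exists in $T$. The hypothesis of Corollary~\ref{col:le58}---``no distinct $A \I B, C\I D \in T$ with $\PrEq{A}{C}$ and $\dipr A \I C$''---is then satisfied by $T$ even though $T$ is only assumed non-redundant. Inspecting the proof of Lemma~\ref{le:th51}, its only uses of minimality of $T_1$ are through Corollary~\ref{col:le58} (applied to $T_1$) and through Lemma~\ref{le:th51i} (which needs only non-redundancy of both theories); therefore, with the contradiction hypothesis in hand, parts (ii) and (iii) of Lemma~\ref{le:th51} hold with $T_1 := T$ and $T_2 := T_{\min}$. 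Likewise Lemma~\ref{le:th51i} gives, for each $A \in \et[T]{H}$, some $C \in \et[T_{\min}]{H}$ with $\dipr[T] A \I C$; using (ii) the assignment $A \mapsto C$ is well-defined, and by (iii) it is injective, so $|\et[T]{H}| \le |\et[T_{\min}]{H}|$ for every $H$. Summing over the transversal of $\ETs$ (legitimate since the nonempty blocks of $T$ and $T_{\min}$ are in bijection over the same $H$'s) gives $|T| \le |T_{\min}|$, contradicting $|T| > |T_{\min}|$. Hence the desired distinct formulas $A \I B, C \I D \in T$ with $\PrEq{A}{C}$ and $\dipr A \I C$ must exist.

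The main thing to check carefully is precisely that the proofs of Lemma~\ref{le:th51}(ii),(iii) and Lemma~\ref{le:th51i} never invoke minimality of $T_1$ beyond what Corollary~\ref{col:le58} provides---in particular that Lemma~\ref{le:le56} (transitivity of $\dipr[]$) and Theorem~\ref{th:le55} (equivalence-invariance of $\dipr[]$), both of which are used, hold for arbitrary theories---so that substituting ``non-redundant $T$ satisfying the conclusion-negation'' for ``minimal $T_1$'' is legitimate. Everything else is bookkeeping with the partition $\ETs$.
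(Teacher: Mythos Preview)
Your proposal is correct and follows essentially the same strategy as the paper: compare $T$ with a minimal equivalent theory, find a block $\et[T]{H}$ strictly larger than $\et[T_{\min}]{H}$, and use Lemma~\ref{le:th51i} in both directions together with transitivity of $\dipr[]$ (Lemma~\ref{le:le56}) to produce the required pair. The only difference is packaging: the paper argues directly---it applies pigeonhole to the map $\et[T]{H}\to\et[T_{\min}]{H}$, bounces the common target $C$ back to some $A'\in\et[T]{H}$, and exhibits $A_1\I B$, $A'\I D$ explicitly---whereas you argue by contradiction and re-audit the proof of Lemma~\ref{le:th51} to see that minimality of $T_1$ is used only through the conclusion of Corollary~\ref{col:le58}; the underlying combinatorics are identical.
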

\begin{proof}
  First, observe that if there are $A \I B \in T$ and $C \I D \in T$
  such that $A = C$, then trivially $\PrEq{A}{C}$ and $\dipr A \I C$.
  So, assume that $T$ contains no such formulas. Taking into account
  this assumption, we have $|\ET{H}| = |\et{H}|$ for all $H \in L^Y$.
  Furthermore, let $T'$ be a minimal theory which is equivalent to~$T$.
  Since it is minimal, we also have $|\et[T']{H}| = |\ET[T']{H}|$
  for all $H \in L^Y$.
  
  Since $T$ is not minimal, then there is $H \in L^Y$ such that
  $|\et[T']{H}| = |\ET[T']{H}| < |\ET{H}| = |\et{H}|$ because otherwise
  the non-minimality of $T$ would be violated.
  Using Lemma~\ref{le:th51i}, for any 
  $A_i \in \et{H}$ there is $C_i \in \et[T']{H}$
  such that $\dipr A_i \I C_i$. Considering
  $|\et[T']{H}| < |\et{H}|$ and using the pigeonhole
  principle, there are $A_1,A_2 \in \et{H}$ and $C \in \et[T']{H}$
  such that $A_1 \ne A_2$, $\dipr A_1 \I C$, and $\dipr A_2 \I C$.
  Moreover, for $C$ there is $A' \in \et{H}$
  such that $\dipr[T'] C \I A'$, i.e., $\dipr C \I A'$.
  Since $A_1$ and $A_2$ are distinct,
  either $A_1 \ne A'$ or $A_2 \ne A'$.
  In case of $A_1 \ne A'$, the fact that
  $A_1,A' \in \et{H}$ yields that $\PrEq{A_1}{A}$ and
  there are $A_1 \I B \in T$ and $A' \I D \in T$.
  Furthermore, by $\dipr A_1 \I C$ and $\dipr C \I A'$
  it follows that $\dipr A_1 \I A'$ by~Lemma~\ref{le:le56}.
  Thus, the desired formulas we look for are $A_1 \I B \in T$
  and $A' \I D \in T$. The case of $A_2 \ne A'$ uses the same arguments.
\end{proof}

Based on our observations, we may introduce an algorithm which,
given a theory $T$, finds a theory which is equivalent to $T$ and minimal.
Indeed, one may utilize a standard procedure to find a non-redundant
subset of $T$. That is, one removes all $A \I B \in T$ such that
$T \setminus \{A \I B\} \vdash A \I B$. Then, Corollary~\ref{col:le58}
and Theorem~\ref{th:th52} yield an if and only if condition for $T$
being minimal. Namely, $T$ is minimal if{}f it does not contain 
distinct $A \I B \in T$ and $C \I D \in T$
such that $\PrEq{A}{C}$ and $\dipr A \I C$.
If $T$ contains such formulas, one applies Theorem~\ref{th:le58} to find
a theory which is strictly smaller than $T$. Then, one may repeat
the process until the theory is minimal. The algorithm is illustrated
by the following example.

\begin{example}
  We conclude the examples by applying the previous observations to
  find a minimal theory which is equivalent to $T_2$ from Example~\ref{ex:2}.
  Recall that $T_2$ is non-redundant but it is not minimal
  ($T_1$ from Example~\ref{ex:2} is equivalent to $T_2$ and is strictly
  smaller). Since $T_2$ is not minimal, Theorem~\ref{th:th52} ensures
  there are $A \I B \in T_2$
  and $C \I D \in T_2$ such that $\PrEq[T_2]{A}{C}$ and $\dipr[T_2] A \I C$.
  In particular, we may take 
  $\{{}^{0.5\!}/x,y,{}^{0.5\!}/z\} \I \{x,{}^{0.5\!}/y,{}^{0.5\!}/z\}$
  for $A \I B$ and $\{{}^{0.5\!}/y\} \I \{{}^{0.5\!}/x,{}^{0.5\!}/y,z\}$
  for $C \I D$. Applying Theorem~\ref{th:le58}, $T_2$ can be transformed
  into $T'_2$ of the form
  \begin{align*}
    T'_2 = \{&
    \{{}^{0.5\!}/y\} \I \{x,{}^{0.5\!}/y,z\},
    \{z\} \I \{{}^{0.5\!}/x\}, \\[-2pt]
    &\{{}^{0.5\!}/x,{}^{0.5\!}/y\} \I
    \{y,z\}, \{x,z\} \I \{{}^{0.5\!}/x,y,{}^{0.5\!}/z\}\}
  \end{align*}
  which is strictly smaller than $T_2$. Applying Corollary~\ref{col:le58},
  $T'_2$ is not minimal since for
  $A \I B$ being
  $\{{}^{0.5\!}/x,{}^{0.5\!}/y\} \I \{y,z\}\}$
  and for $C \I D$ being 
  $\{{}^{0.5\!}/y\} \I \{x,{}^{0.5\!}/y,z\}$, we have
  $\PrEq[T'_2]{A}{C}$ and $\dipr[T'_2] A \I C$.
  Therefore, we may apply Theorem~\ref{th:le58} in order
  to transform $T'_2$ into
  \begin{align*}
    T''_2 = \{&
    \{{}^{0.5\!}/y\} \I \{x,y,z\},
    \{z\} \I \{{}^{0.5\!}/x\},
    \{x,z\} \I \{{}^{0.5\!}/x,y,{}^{0.5\!}/z\}\}.
  \end{align*}
  As one can check, $T''_2$ contains no distinct $A \I B$ and $C \I D$
  such that $\PrEq[T''_2]{A}{C}$ and $\dipr[T''_2] A \I C$.
  Hence, by Theorem~\ref{th:th52}, $T''_2$ is minimal.
  Notice that we have derived a minimal equivalent theory $T''_2$
  from $T_2$ without using $T_1$ (from Example~\ref{ex:2}).
  Also, the minimal equivalent theories $T''_2$
  and $T_1$ are distinct.
\end{example}

\begin{remark}
  The the asymptotic time complexity of obtaining a minimal equivalent
  theory is polynomial. Indeed, given a theory $T$, Theorem~\ref{th:th52}
  is applied at most $|T|$ times. In each step, we inspect pairs of formulas
  $A \I B$ and $C \I D$ such that $\PrEq{A}{C}$ and $\dipr A \I C$.
  Both $\PrEq{A}{C}$ and $\dipr A \I C$ can be tested based on computing
  closures, i.e., in time $O(nl)$, where $n$ is the length of $T$
  (total number of attributes appearing in all formulas in $T$)
  and $l$ is the size of the structure of degrees
  (i.e., $l$ is a multiplicative constant depending on $\mathbf{L}$),
  see \textsc{GLinClosure}~\cite{BeVy:GlinClosure}.
  Interestingly, the information on equivalence of $\mathbf{L}$-sets of
  attributes and on direct provability can be computed only once. Indeed,
  since the algorithm transforms theories into equivalent ones,
  by Theorem~\ref{th:le55}, we get that the direct provability of formulas
  is preserved. This makes testing of $\PrEq{A}{C}$ and $\dipr A \I C$
  efficient. Altogether, the algorithm runs in $O(n^2l)$,
  where $n$ is the length of $T$, and $l$ is the size of $\mathbf{L}$.
  This is in contrast with the instance-based approach mentioned in the
  introduction which relies on computing pseudo-intents which is hard
  even in the bivalent case, see~\cite{DiSe:OCEP}.
\end{remark}

\subsubsection*{Conclusion}
We presented an if-and-only-if criterion of minimality of non-redundant set
of graded attribute implications with semantics parameterized by
globalization. The result is constructive and allows to transform
a non-redundant set of graded attribute implications into an equivalent and
minimal one. Issues which we find interesting for future research include
generalization of the approach to accomodate arbitrary linguistic hedges,
construction of efficient algorithms based on the present result,
and connections to other techniques for removing redundancy in both the
classic and graded settings, e.g., the instance-based
approaches like~\cite{GuDu,Zhai}.

\subsubsection*{Acknowledgment}
Supported by grant no. \verb|P202/14-11585S| of the Czech Science Foundation.


\footnotesize
\bibliographystyle{amsplain}
\bibliography{dirdet}

\providecommand{\bysame}{\leavevmode\hbox to3em{\hrulefill}\thinspace}
\providecommand{\MR}{\relax\ifhmode\unskip\space\fi MR }
\providecommand{\MRhref}[2]{%
  \href{http://www.ams.org/mathscinet-getitem?mr=#1}{#2}
}
\providecommand{\href}[2]{#2}
\begin{thebibliography}{10}

\bibitem{AgImSw:ASR}
Rakesh Agrawal, Tomasz Imieli\'{n}ski, and Arun Swami, \emph{Mining association
  rules between sets of items in large databases}, Proceedings of the 1993 ACM
  SIGMOD International Conference on Management of Data (New York, NY, USA),
  SIGMOD '93, ACM, 1993, pp.~207--216.

\bibitem{Arm:Dsdbr}
William~Ward Armstrong, \emph{Dependency structures of data base
  relationships}, Information Processing 74: Proceedings of IFIP Congress
  (Amsterdam) (J.~L. Rosenfeld and H.~Freeman, eds.), North Holland, 1974,
  pp.~580--583.

\bibitem{Baaz}
Mathias Baaz, \emph{Infinite-valued {G}{\"o}del logics with 0-1 projections and
  relativizations}, G{\"O}DEL '96, Logical Foundations of Mathematics, Computer
  Sciences and Physics (Berlin/Heidelberg), Lecture Notes in Logic, vol.~6,
  Springer-Verlag, 1996, pp.~23--33.

\bibitem{Bel:FRS}
Radim Belohlavek, \emph{Fuzzy {R}elational {S}ystems: {F}oundations and
  {P}rinciples}, Kluwer Academic Publishers, Norwell, MA, USA, 2002.

\bibitem{BeVy:DASFAA}
Radim Belohlavek and Vilem Vychodil, \emph{Data tables with similarity
  relations: Functional dependencies, complete rules and non-redundant bases},
  Database Systems for Advanced Applications (Mong Lee, Kian-Lee Tan, and Vilas
  Wuwongse, eds.), Lecture Notes in Computer Science, vol. 3882, Springer
  Berlin Heidelberg, 2006, pp.~644--658.

\bibitem{BeVy:GlinClosure}
\bysame, \emph{Basic algorithm for attribute implications and functional
  dependencies in graded setting}, International Journal of Foundations of
  Computer Science \textbf{19} (2008), no.~2, 297--317.

\bibitem{BeVy:Fcalh}
\bysame, \emph{Formal concept analysis and linguistic hedges}, International
  Journal of General Systems \textbf{41} (2012), no.~5, 503--532.

\bibitem{BeVy:ADfDwG}
\bysame, \emph{Attribute dependencies for data with grades}, CoRR
  \textbf{abs/1402.2071} (2014), \par\url{http://arxiv.org/abs/1402.2071}.

\bibitem{SL}
Pablo Cordero, \'{A}ngel. Mora, Inmaculada~P\'erez de~Guzm\'an, and Manuel
  Enciso, \emph{Non-deterministic ideal operators: An adequate tool for
  formalization in data bases}, Discrete Applied Mathematics \textbf{156}
  (2008), no.~6, 911--923.

\bibitem{DiSe:OCEP}
Felix Distel and Bar\i\c{s} Sertkaya, \emph{On the complexity of enumerating
  pseudo-intents}, Discrete Appl. Math. \textbf{159} (2011), no.~6, 450--466.

\bibitem{EsGo:MTL}
Francesc Esteva and Llu{\'\i}s Godo, \emph{Monoidal t-norm based logic:
  {T}owards a logic for left-continuous t-norms}, Fuzzy Sets and Systems
  \textbf{124} (2001), no.~3, 271--288.

\bibitem{EsGoNo:Hedges}
Francesc Esteva, Llu\'\i s~Godo, and Carles Noguera, \emph{A logical approach
  to fuzzy truth hedges}, Information Sciences \textbf{232} (2013), 366--385.

\bibitem{GaJiKoOn:RL}
Nikolaos Galatos, Peter Jipsen, Tomacz Kowalski, and Hiroakira Ono,
  \emph{{R}esiduated {L}attices: {A}n {A}lgebraic {G}limpse at {S}ubstructural
  {L}ogics, {V}olume 151}, 1st ed., Elsevier Science, San Diego, USA, 2007.

\bibitem{GaWi:FCA}
Bernhard Ganter and Rudolf Wille, \emph{Formal concept analysis: Mathematical
  foundations}, 1st ed., Springer-Verlag New York, Inc., Secaucus, NJ, USA,
  1997.

\bibitem{Gog:Lic}
Joseph~A. Goguen, \emph{The logic of inexact concepts}, Synthese \textbf{19}
  (1979), 325--373.

\bibitem{Got:Mfl}
Siegfried Gottwald, \emph{Mathematical fuzzy logics}, Bulletin of Symbolic
  Logic \textbf{14} (2008), no.~2, 210--239.

\bibitem{GuDu}
Jean-Louis Guigues and Vincent Duquenne, \emph{Familles minimales
  d'im\-pli\-ca\-tions informatives resultant d'un tableau de donn\'ees
  binaires}, Math. Sci. Humaines \textbf{95} (1986), 5--18.

\bibitem{Haj:MFL}
Petr H\'ajek, \emph{Metamathematics of {F}uzzy {L}ogic}, Kluwer Academic
  Publishers, Dordrecht, The Netherlands, 1998.

\bibitem{Haj:Ovt}
\bysame, \emph{On very true}, Fuzzy Sets and Systems \textbf{124} (2001),
  no.~3, 329--333.

\bibitem{Ho:ML}
Ulrich H\"ohle, \emph{Monoidal logic}, Fuzzy-Systems in Computer Science
  (R.~Kruse, J.~Gebhardt, and R.~Palm, eds.), Artificial Intelligence /
  K\"unstliche Intelligenz, Vieweg+Teubner Verlag, 1994, pp.~233--243.

\bibitem{Hol}
Richard Holzer, \emph{Knowledge acquisition under incomplete knowledge using
  methods from formal concept analysis: Part {I}}, Fundamenta Informaticae
  \textbf{63} (2004), no.~1, 17--39.

\bibitem{Lloyd84}
John~W. Lloyd, \emph{Foundations of {L}ogic {P}rogramming}, Springer-Verlag New
  York, Inc., New York, NY, USA, 1984.

\bibitem{Mai:TRD}
David Maier, \emph{Theory of {R}elational {D}atabases}, Computer Science Pr,
  Rockville, MD, USA, 1983.

\bibitem{Pav:Ofl1}
Jan Pavelka, \emph{On fuzzy logic {I}: {M}any-valued rules of inference},
  Mathematical Logic Quarterly \textbf{25} (1979), no.~3--6, 45--52.

\bibitem{Pav:Ofl2}
\bysame, \emph{On fuzzy logic {II}: {E}nriched residuated lattices and
  semantics of propositional calculi}, Mathematical Logic Quarterly \textbf{25}
  (1979), no.~7--12, 119--134.

\bibitem{Pav:Ofl3}
\bysame, \emph{On fuzzy logic {III}: {S}emantical completeness of some
  many-valued propositional calculi}, Mathematical Logic Quarterly \textbf{25}
  (1979), no.~25--29, 447--464.

\bibitem{TaTi:Gist}
Gaisi Takeuti and Satoko Titani, \emph{Globalization of intuitionistic set
  theory}, Annals of Pure and Applied Logic \textbf{33} (1987), 195--211.

\bibitem{UrVy:Dddosbd}
Lucie Urbanova and Vilem Vychodil, \emph{Derivation digraphs for dependencies
  in ordinal and similarity-based data}, Information Sciences \textbf{268}
  (2014), 381--396.

\bibitem{Za:Afstilh}
Lotfi~A. Zadeh, \emph{A fuzzy-set-theoretic interpretation of linguistic
  hedges}, Journal of Cybernetics \textbf{2} (1972), no.~3, 4--34.

\bibitem{Za:lv1}
\bysame, \emph{The concept of a linguistic variable and its application to
  approximate reasoning--{I}}, Information Sciences \textbf{8} (1975), no.~3,
  199--249.

\bibitem{Za:lv2}
\bysame, \emph{The concept of a linguistic variable and its application to
  approximate reasoning--{II}}, Information Sciences \textbf{8} (1975), no.~4,
  301--357.

\bibitem{Za:lv3}
\bysame, \emph{The concept of a linguistic variable and its application to
  approximate reasoning--{III}}, Information Sciences \textbf{9} (1975), no.~1,
  43--80.

\bibitem{Zak:Mnrar}
Mohammed~J. Zaki, \emph{Mining non-redundant association rules}, Data Mining
  and Knowledge Discovery \textbf{9} (2004), 223--248.

\bibitem{Zhai}
Yanhui Zhai, Deyu Li, and Kaishe Qu, \emph{Decision implication canonical
  basis: a logical perspective}, Journal of Computer and System Sciences
  (2014), \par\url{http://dx.doi.org/10.1016/j.jcss.2014.06.001}.

\end{thebibliography}

\end{document}